\title{Signal Temporal Logic-Guided \\ Apprenticeship Learning}
\author{
  Aniruddh G.~Puranic, Jyotirmoy V.~Deshmukh, Stefanos Nikolaidis\\
  Department of Computer Science\\
  University of Southern California, USA\\
  \texttt{\{puranic, jdeshmuk, nikolaid\}@usc.edu} \\
}
\begin{document}
\maketitle

%===============================================================================

\begin{abstract}
    Apprenticeship learning crucially depends on effectively learning rewards, and hence control policies from user demonstrations. Of particular difficulty is the setting where the desired task consists of a number of sub-goals with temporal dependencies. The quality of inferred rewards and hence policies are typically limited by the quality of demonstrations, and poor inference of these can lead to undesirable outcomes. In this paper, we show how temporal logic specifications that describe high level task objectives, are encoded in a graph to define a temporal-based metric that reasons about behaviors of demonstrators and the learner agent to improve the quality of inferred rewards and policies. Through experiments on a diverse set of robot manipulator simulations, we show how our framework overcomes the drawbacks of prior literature by drastically improving the number of demonstrations required to learn a control policy.
\end{abstract}

\keywords{Learning from Demonstration, Reinforcement Learning, Formal Methods, Temporal Logic}

%===============================================================================

% Introduction
\section{Introduction}
\label{sec:intro}
Recent advances in robotics have led to the development of algorithms that extract control policies for autonomous agents from human demonstrations via the paradigm of learning-from-demonstrations (LfD). An interesting sub-area of LfD is the use of demonstrations alongside reinforcement learning (RL) to either (i) initialize policies for the RL agent \cite{abbeel_bcinit_rl} via behavior cloning (BC) \cite{bco_stone} or (ii) infer rewards using inverse RL (IRL) \cite{ng_russell_irl} for tasks from which policies can be extracted - apprenticeship learning via IRL \cite{abbeel_ng_apprenticeship}. However, designing rewards for Markov Decision Processes (MDPs) \cite{sutton_reinforcement_2018} is non-trivial and typically requires expert knowledge in designing reward functions that can ensure safety and efficiency in the extracted RL policies. More importantly, for robots to be robust to perturbations in the environment, it is crucial to capture the overall goals/intentions of demonstrators, i.e., via IRL, rather than merely mimicking them \cite{abbeel_ng_apprenticeship}. Our work draws inspiration from Apprenticeship Learning (AL) \cite{abbeel_ng_apprenticeship} to learn both rewards and policies. 

A drawback of AL is that it relies on demonstrations being optimal, which is seldom the case in real-world scenarios. More recent IRL and BC-based methods that learn from suboptimal demonstrations \cite{ziebart_maximumentropy,mce_irl_ziebart,score_irl,chen_ssrr_2020,brown_DREX} measure optimality or performance based on statistical noise deviation from the true/optimal demonstrations. However, such noise-to-performance measures are extracted {\em empirically} and hence lack formal reasoning that can explain the quality of behaviors. Furthermore, as the core reward-inference algorithm in AL uses IRL, the rewards are inherently Markovian, and they do not account for temporal dependencies among subgoals in demonstrations. Research in reward design \cite{abel2021expressivity, pitis2022rational} discusses the need for non-Markovian reward representations, especially in time-dependent multi-goal RL settings. Such non-Markovian rewards are typically designed using spilt-MDPs \cite{abel2022express} and reward machines \cite{ltl_reward_machines, generic_rewardmachines}, which require significantly increasing the state and/or action spaces of the MDPs thereby increasing the space and computational complexity for the underlying RL algorithms.

To address these limitations, our prior work \cite{puranic_det_corl2020, puranic_stoch_cont_ral2021} has proposed to use Signal Temporal Logic (STL) to define high-level tasks, and evaluate and rank demonstrations to infer rewards. The semantics of STL measure the quality/fitness, which is the degree of task satisfaction by demonstrations. This facilitates holistic temporal-based ranking of demonstrations and agent behaviors to formulate non-Markovian rewards. Our LfD-STL framework can learn from only a handful of even imperfect/suboptimal demonstrations, {\em without the need to augment the MDP spaces}. It has shown to significantly outperform state-of-the-art IRL methods in terms of reward quality, number of demonstrations required and safety of the learned policy. It can also be applied to stochastic and continuous spaces to extract rewards and behaviors consistent with the task specifications. Our recent work proposed PeGLearn \cite{puranic_peglearn_ral2023} to automatically infer non-Markovian rewards for tasks comprising multiple STL objectives, addressing the representation issues discussed in \cite{pitis2022rational}. PeGLearn uses directed graphs to create a partial ordering of specifications to produce a single graph - {\em performance graph} - that holistically captures the demonstrated behaviors.

While the LfD-STL framework with PeGLearn can offer assurances in safety of the learned rewards and policy, it does not explicitly reason about the performance of the learned RL policy. The reason being that LfD-STL is an open-loop framework where the inferred rewards are fixed and are not guaranteed to be optimal without any exploration. Without feedback from agent exploration, it may be impossible to discover better behaviors. We aim to address this issue by using the performance graph as a metric, which we refer to as the {\em performance-graph advantage (PGA)} to guide the RL process. We propose the AL-STL framework that extends LfD-STL with closed-loop learning wherein the reward function and policy are updated iteratively. PGA can be interpreted as the quantification of the areas for improvement of the policy, and is optimized alongside appropriate existing RL algorithms. This enables reasoning about possibly new behaviors that were not demonstrated before, but still satisfy the task specifications. The key insight of our work is that {\em a cumulative/collective measure of (multiple) task objectives along with exploration in the neighborhood of observed behaviors guides the refinement of rewards and policies that can extrapolate beyond demonstrated behaviors}. Our contributions are summarized as follows:
\begin{enumerate}
\item We propose AL-STL, a novel extension to the LfD-STL framework to enable closed-loop learning of the reward function and policy.
\item We quantify STL-based performance graphs learned via PeGLearn in terms of an advantage function to guide the RL training process, and formally reason about policy improvements when demonstrations are suboptimal. 
\item We evaluate our approach on a variety of robotic manipulation tasks and discuss how our framework outperforms state-of-the-art literature. 
\end{enumerate}

% Related Work
\section{Related Works}
\label{sec:rel_work}

% LfD in general
Learning-from-demonstrations (LfD) to extract control policies can be broadly classified into two main categories based on the underlying intentions: (i) imitation learning (IL), such as behavior cloning (BC) via supervised learning \cite{bco_stone}, where the objective is to directly mimic the actions of the demonstrators, and (ii) inverse reinforcement learning (IRL)  \cite{ng_russell_irl, ziebart_maximumentropy, abbeel_ng_apprenticeship}, where the objective is to characterize the overall goal of the demonstrators via cost/reward functions. 

% General LfD: BC and IRL
Learning rewards via entropy-enabled IRL \cite{ziebart_maximumentropy, mce_irl_ziebart, adv_irl, gaifo} regard suboptimal demonstrations as noisy deviations from the optimal statistical model, and hence require access to many demonstrations. Learning better policies from suboptimal demonstrations has been explored in \cite{brown_DREX}. This method injects noise into trajectories to infer a ranking, however it synthetically generates trajectories via BC which has issues with covariate shift and induces undesirable bias. \cite{chen_ssrr_2020} addresses this by defining a relation between injected noise and performance. However, this noise-performance relationship is empirically derived and lacks formal reasoning. Score-based IRL \cite{score_irl} uses expert-scored trajectories to learn a reward function, relying on a large set of nearly-optimal demonstrations and hence generating scores for each of them. Additionally, rewards learned via IRL-based methods are Markovian by nature and typically suited to single-goal tasks, as discussed in prior work \cite{puranic_det_corl2020,puranic_stoch_cont_ral2021}.

% Learning with Temporal Logics
In the area of LfD with temporal logics, the closest to our work is a counterexample-guided approach using probabilistic computation tree logics (PCTL) for safety-aware AL \cite{wenchao_safety_al}. Our work differs from it in two significant ways: (i) we use STL which is applicable to continuous spaces and offers timed-interval semantics, which are lacking in PCTL, and (ii) the reward inference algorithm in \cite{wenchao_safety_al} relies on IRL, while ours is based on LfD-STL \cite{puranic_peglearn_ral2023}, which greatly improves sample complexity, accuracy and inference speed. Trade-offs for multi-objective RL have been explored in \cite{cho_mpc_stl} by explicitly defining specification priorities beforehand. Alternate approaches convert specifications to their equivalent automaton and augment it to the MDP states \cite{belta_lfd,memarian_cdc2020,wen_lfd_highlevel}. In our work, we do not alter the MDP structure, thereby avoiding the drawbacks of increased space and computational complexities of augmented MDPs.

% Background
\section{Preliminaries}
\label{sec:prelims}

\subsection{Mathematical Notations}
\label{sec:math}

The interactions between the agent (robot) and the environment are modeled with a Markov Decision Process.
\begin{definition}[Markov Decision Process (MDP)]
    An MDP is given by a tuple $M=\MDP$ where $\StateSpace \subset \Reals^k$ is the state space and $\ActionSpace \subset \Reals^l$ is the action space of the system; $T$ is the transition function, where $T(s, a, s') = Pr(s'\mid s,a)$; $R$ is a reward function that typically maps either some $s \in \StateSpace$, state-action pair $\StateSpace \times \ActionSpace$ or some transition $\StateSpace \times \ActionSpace \times \StateSpace$ to $\Reals$.
\end{definition}
The goal of RL is to find a policy $\pi : \StateSpace \times \ActionSpace \rightarrow [0,1]$ that maximizes the total (discounted) reward from performing actions on an MDP, i.e., the objective is to compute $\max \sum_{t = 0}^{\infty}{\gamma^t r_t}$, where $r_t$ is the output of the reward function $R$ at $t$ and $\gamma$ is the discount factor. In this paper, we assume full observation of the state space for MDPs. 

\begin{definition}[Trajectory or Episode Rollout]
	A trajectory in an MDP is a sequence of state-action pairs of finite length $L \in \mathbb{N}$ by following some policy $\pi$ from an initial state $s_0$, i.e., a trajectory $\tau =\langle s_0,a_0,\cdots,s_L \rangle$, where $s_i \in \StateSpace$ and $a_i \in \ActionSpace$. 
    \label{def:rollout}
\end{definition}

In our LfD setting, the demonstrations are collected on the robot itself (e.g., via teleoperation or kinesthetic teaching), so the observations are elements of the MDP state and action spaces. Hence, we interchangeably refer to trajectories or rollouts as demonstrations. For intuition, we use demonstrations to refer to rollouts provided to the RL agent as inputs, and represent by $\demo$.

Prior work in LfD \cite{puranic_det_corl2020,puranic_stoch_cont_ral2021,puranic_peglearn_ral2023} uses Signal Temporal Logic (STL) \cite{stl_complexity, donze_robust_2010} to define high-level tasks.

\paragraph*{Signal Temporal Logic (STL)} STL is a real-time logic, generally interpreted over a dense-time domain for signals whose values are from a continuous metric space (such as $\Reals^n$). The basic primitive in STL is a {\em signal predicate} $\mu$ that is a formula of the form $f(\vx(t)) > 0$, where $\vx(t)$ is the tuple $(s,a)$ of the trajectory $\vx$ at time $t$, and $f$ maps the signal domain $\domain = (\StateSpace \times \ActionSpace)$ to $\Reals$. STL formulas are then defined recursively using Boolean combinations of sub-formulas, or by applying an interval-restricted temporal operator to a sub-formula.  The syntax of STL is formally defined as follows: $\varphi ::=  \mu \mid \neg \varphi \mid \varphi \wedge \varphi \mid \alw_{I} \varphi  \mid \ev_{I} \varphi \mid \varphi \until_{I} \varphi$. Here, $I = [a,b]$ denotes an arbitrary time-interval, where $a,b\in\Reals^{\ge
0}$. The semantics of STL are defined over a discrete-time signal $\sig$ defined over some time-domain $\timedomain$. The Boolean satisfaction of a signal predicate is simply \textit{True} ($\top$) if the predicate is satisfied and \textit{False} ($\bot$) if it is not, the semantics for the propositional logic operators $\neg, \land$ (and thus $\lor, \rightarrow$) follow the obvious semantics. The following behaviors are represented by the temporal operators:
\begin{itemize}
    \item At any time $t$, $\always_I(\varphi)$ says that $\varphi$ must hold for all samples in $t+I$.
    \item At any time $t$, $\eventually_I(\varphi)$ says that $\varphi$ must hold \textit{at
    least once} for samples in $t+I$.
    \item At any time $t$, $\varphi \until_I \psi$ says that $\psi$ must hold at some time $t'$ in $t+I$, and in $[t,t')$, $\varphi$ must hold at all times.
\end{itemize}

The quantitative (robustness) semantics of STL, defined in \cite{fainekos_robustness_2009,donze_robust_2010}, capture the performance of trajectories. Directed acyclic graphs are used to encode the preferences or performance of the demonstrators. Such graphs provide a convenient way to interpret reward functions for RL tasks.

\begin{definition}[Directed Acyclic Graph (DAG)]
A directed graph is an ordered pair $G = (V, E)$ where $V$ is a set of elements called nodes and $E$ is a set of ordered pairs of nodes called edges, which are directed from one node to another. An edge $e = (u, v)$ is directed from node $u$ to node $v$. A \textit{DAG} is a directed graph that has no directed cycles, i.e., it can be topologically ordered. 
\end{definition}

A path $x \leadsto y$ in $G$ is a set of nodes starting from $x$ and ending at $y$ by following the directed edges from $x$. The ancestors of a node $v$ is the set of all nodes in $G$ that have a path to $v$. Formally, $ancestor(v) = \{u \mid u \leadsto v, u \in V\}$. In our setting, we use a weighted DAG, where each node $v \in V$ is associated with a pair of real numbers - value and weight of the node, represented by $\nu(v)$ and $w(v)$ respectively. Each edge $(u,v) \in E$ is associated with a real number - weight of the edge, represented by $w(u,v)$. Note the difference in number of arguments in the notations for edge and node weights.

\subsection{Reward Inference from Demonstrations and Specifications}
\label{sec:lfdstl_peglearn}

In LfD-STL, the reward function $R$ of the MDP is unknown, instead, it is presented with a finite set of high-level task descriptions in STL $\Phi = \{\phix{1}, \phix{2}, \cdots, \phix{n}\}$ and a finite set of demonstrations $\demoset = \{\demo_1, \demo_2, \cdots, \demo_m\}$, from which the reward function and policy must be inferred. 

\paragraph*{LfD-STL Framework} For a specification $\varphi \in \Phi$ and a demonstration $\demo \in \demoset$ defined as in \autoref{def:rollout}, the value $\rho(\varphi, \demo, t)$ represents how well the demonstration satisfied the given specification from time $t$, which is the quality of the demonstration. To evaluate the entire trajectory, the robustness is defined at $t=0$, i.e. $\rho(\varphi, \demo, 0)$ and is implicitly denoted by $\rho(\varphi, \demo)$. For a demonstration $\demo$, we have an array of evaluations over $\Phi$, given by $\rhod = [\rho(\phix{1},\demo), \cdots, \rho(\phix{n},\demo)]^T$.

Then, for each $\demo \in \demoset$, a local DAG $G_\demo$ is initially constructed via the PeGLearn algorithm \cite{puranic_peglearn_ral2023}, wherein, (i) each task specification $\varphi \in \Phi$ is represented by a node, with the value of the node indicating the fitness of $\demo$ for $\varphi$, i.e., $\nu(\varphi) = \rho(\varphi, \demo)$, and (ii) the edges, along with their corresponding weights encode information about the preferences or performance between every pair of specifications as exhibited by the behavior. For any edge $e(\phix{i},\phix{j})$, its weight, defined by $\nu(\phix{i})-\nu(\phix{j})$, indicates a measure by which the value of $\phix{j}$ must be increased to match the value of $\phix{i}$. As an edge in $G_\demo$ is always directed from a higher-valued node to a lower-valued node, the edge weight is always positive. Absence of an edge between a pair of nodes indicates a zero-weighted edge. {\em Note that this local DAG is applicable to all trajectories that conform to \autoref{def:rollout}}. Thus, PeGLearn maps a trajectory $\tau$ and $\Phi$ to a DAG $G_\tau$. In our work, since specifications can be of different scales (e.g., a specification that monitors acceleration, while another monitors distance), we assume that the robustness bounds are known apriori and we normalize/scale the robustness values to be bounded to some $[-\Delta, \Delta]$. Scaling of robustness can be achieved with piece-wise linear functions or smooth semantics \cite{smooth_stl_semantics}. In addition to extracting a DAG for each trajectory, PeGLearn also captures the holistic behavior of a set of trajectories by aggregating their corresponding local DAGs into a global DAG $\globalG$. The nodes in $\globalG$ are weighted to capture the relative pair-wise priorities of specifications based on the node ancestors or dependencies via $w(\varphi) = |\Phi| - ancestor(\varphi)$, illustrated with an example in \autoref{fig:dag_eg}.

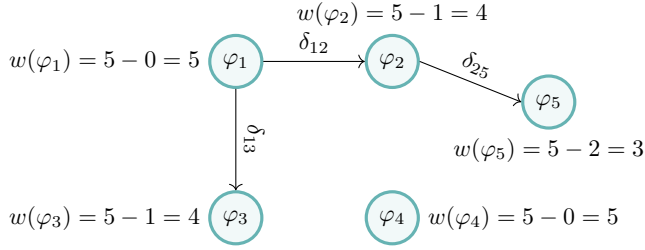
\begin{figure}[htbp]
    \centering
    \tikzset{mypic/.style={scale=1.0, transform shape, node distance={15mm}},
	roundnode/.style={circle, draw=teal!60, fill=teal!5, very thick, minimum size=5mm},
	squarednode/.style={rectangle, draw=red!60, fill=red!5, very thick, minimum size=5mm},
}

\begin{tikzpicture}[mypic,scale=0.9]
    %Nodes
    \node[roundnode, label=left:{$w(\phix{1})=5-0=5$}](phi1){$\varphi_1$};
    \node[roundnode, label=above:{$w(\phix{2})=5-1=4$}](phi2)[right=of phi1]{$\varphi_2$};
    \node[roundnode](phi3)[below=of phi1, label=left:{$w(\phix{3})=5-1=4$}]{$\varphi_3$};
    \node[roundnode](phi4)[below=of phi2, label=right:{$w(\phix{4})=5-0=5$}]{$\varphi_4$};
    \node[roundnode](phi5)[right=of phi2, yshift=-0.6cm, label=below:{$w(\phix{5})=5-2=3$}]{$\varphi_5$};
    
    %Lines
    \draw[->] (phi1.east) -- node[midway, above, sloped] {$\delta_{12}$} (phi2.west);
    \draw[->] (phi1.south) -- node[midway, above, sloped] {$\delta_{13}$} (phi3.north);
	\draw[->] (phi2.east) -- node[midway, above, sloped] {$\delta_{25}$} (phi5.west);
\end{tikzpicture}
    \caption{Weights on nodes (specifications) in a DAG.}
    \label{fig:dag_eg}
\end{figure}

The node weights are used to induce bias towards specifications during inference of the reward function and hence the RL policy. Prior literature in behavior modeling with reward functions \cite{mce_irl_ziebart,ziebart_maximumentropy,chen_ssrr_2020} has shown that the performance variations in trajectories obey an exponential form. So, the weights of the specifications from the DAG are normalized with \texttt{softmax} to ensure $\sum_{i=1}^{n} w(\phix{i}) = 1$. We now have a weight vector $w_\Phi = [w(\phix{1}), w(\phix{2}), \cdots, w(\phix{n})]^T$. Each demonstration is then assigned a cumulative robustness/fitness value based on these weights, given by $r_\demo = \rhod^T \cdot w_\Phi$. To generalize, all trajectories are associated with a corresponding performance DAG and cumulative fitness. Once the cumulative fitness is assigned to each demonstration, the demonstrations are ranked based on their $r_\demo$ and the rank-scaled rewards are propagated to the observations via the reward inference method described in \cite{puranic_stoch_cont_ral2021}. In short, the method assigns monotonically increasing rewards (i.e., partial cumulative fitness) to the observed states and/or actions in demonstrations that satisfy the specification, while negative rewards are assigned to states in demonstrations that violate the task specifications.

% Methodology
\section{Methodology}
\label{sec:method}

%% PROBLEM FORMULATION
\subsection{Problem Formulation}
\label{sec:problem}

For an MDP\textbackslash R, we are given: (i) a finite dataset of demonstrations $\demoset = \{\demo_1, \demo_2, ..., \demo_m\}$ and (ii) a set of specifications $\Phi = \{\phix{1}, \phix{2}, ..., \phix{n}\}$ unambiguously expressing the tasks to be performed. The objective is to infer rewards and extract a behavior or control policy for an agent such that its behavior is at least as good or better than the demonstrations, and maximizes the satisfaction of the task specifications. The satisfaction of task specifications is conveyed through the learned reward function that the RL agent seeks to maximize.

More formally, consider a policy $\pi$ under the reward function $R$ that captures the degree of satisfaction of $\Phi$. Let $\tau$ indicate a trajectory obtained by a rollout of $\pi$ in an RL episode. Then, our objective is to find

\begin{equation*}
	\pi^*,R^* = \operatorname*{argmax}_{\pi, R} \Expect_{\tau \sim \pi}\sum_{i=1}^{n}\rho(\phix{i}, \tau)
\end{equation*}
Since every trajectory $\tau$ is characterized by its associated performance DAG $G_\tau$, where the value of a node indicates the robustness for the specification it represents (\autoref{sec:lfdstl_peglearn}), the summation term is the sum of all nodes. We thus define $\nodesum_\tau \doteq \sum_{i=1}^{n}\nu(\phix{i}) = \sum_{i=1}^{n}\rho(\phix{i}, \tau)$. Then the objective is: \[\pi^*,R^* = \operatorname*{argmax}_{\pi, R} \Expect_{\tau \sim \pi}[\nodesum_\tau]\]

An issue with this formulation occurs when there are multiple task specifications, i.e., $n>1$. This results in multi-objective learning, which can introduce conflicting specifications and hence requires optimal trade-offs. For example, in autonomous driving or in robot manipulation, consider the task of reaching a goal location as quickly as possible while avoiding obstacles. Depending on the obstacle locations, performing highly safe behaviors (i.e., staying as far away from obstacles as possible) might affect the time to reach the goal. Similarly, a behavior that aims to reach the goal in the least time will likely need to compromise on its safety robustness. We thus need to find the behaviors that not only maximize the total robustness, but are also maximally robust to each task specification. We illustrate this with \autoref{eg:problem_eg1}.
\begin{example}
	Consider a task with three specifications $\Phi=\{\phix{1}, \phix{2}, \phix{3}\}$, and consider two trajectories $\tau_1$ and $\tau_2$ with robustness vectors $[3, 0, 1]$ and $[2, 1, 1]$, respectively. The reward function inferred with $\tau_1$ will have the weight for $\phix{1}$ dominate $\phix{2}$ due to the exponential (\texttt{softmax}) component, while the reward function for $\tau_2$ will have more uniform weights over all specifications, albeit with a little bias towards $\phix{1}$ versus others. Thus, while both have the same $\nodesum$, $\tau_2$ is overall more robust w.r.t. all the task specifications due to better trade-offs.
	\label{eg:problem_eg1}
\end{example}

By this reasoning, it is more desirable to not only maximize the overall sum, but also maximally satisfy the individual specifications with trade-offs. So, how do we ensure that optimal trade-offs are achieved while maximizing the main objective? By observation, it is straight-forward to deduce that the sum of {\em absolute} pair-wise differences in robustness of specifications must be minimized. This sum is indeed exactly encoded by the edges of our trajectory DAG (performance graph) formulation, which is a unique characteristic. Recall that the edges between two nodes (specifications) indicate the difference in their robustness values (performance). We thus capture the optimal trade-offs for a trajectory $\tau$ with the sum of all edges in its corresponding DAG $G_{\tau}$, which is given by $\edgesum_\tau = \sum_{e \in G_\tau}e$; each edge is defined in \autoref{sec:lfdstl_peglearn}. Both $\nodesum$ and $\edgesum$ can be computed in {\em linear time} using the same DAG, without additional computational overhead. One might wonder if merely minimizing $\edgesum$ is sufficient for finding the optimal trade-offs. We provide a counterargument in \autoref{eg:problem_eg2}.
\begin{example}
	Consider the same task from \autoref{eg:problem_eg1}, but with two different trajectories $\tau_3$ and $\tau_4$ with robustness vectors $[1, 1, 1]$ and $[-1, -1, -1]$, respectively. Since all the specifications are equally weighted, the $\edgesum$ for both trajectories are the same ($=0$). But clearly, $\tau_3$ is more robust than $\tau_4$ due to the higher $\nodesum$. Furthermore, consider another trajectory $\tau_5$ with vectors $[-1,2, -1]$, whose $\edgesum$ is 6 (i.e., an edge weight is the pair-wise difference when sorted). Between $\tau_4$ and $\tau_5$, the RL agent will prefer $\tau_4$ due to the lower $\edgesum$, which is undesirable.
	\label{eg:problem_eg2}
\end{example}

From both examples, we conclude that the objective is to maximize $\nodesum$ while minimizing $\edgesum$. Our new formulation is, 
\begin{align*}
	\pi^*,R^* &= \operatorname*{argmax}_{\pi, R} \Expect_{\tau \sim \pi}(\nodesum_\tau - \edgesum_\tau)
\end{align*}

As both $\nodesum$ and $\edgesum$ are dependent on each other, this optimization trade-off can be written as:
\begin{equation}
	\pi^*,R^* = \operatorname*{argmax}_{\pi, R} \Expect_{\tau \sim \pi}(\nodesum_\tau - \lambda \cdot \edgesum_\tau)
	\label{eq:graph_adv}
\end{equation}

The constant $\lambda \in [0, 1)$ acts as a regularizer to penalize behaviors with dominant specifications as in \autoref{eg:problem_eg1}, and is a tunable hyperparameter based on the environment. The formulation is very intuitive because we want to extract the optimal DAG which has no edges. Recall that edges are added only if there is a difference between the node values (i.e., robustness). Ideally, if the policy is optimal, then every rollout has the same maximum robustness for all $\Phi$ and so no edges are created. This representation offers the unique ability of providing an intuitive graphical representation of behaviors for interpretability \cite{puranic_peglearn_ral2023}, and formulating an optimization problem. As the robustness of each specification is bounded in $[-\Delta, \Delta]$, the $\nodesum$ for any trajectory is bounded to $[-n\Delta,n\Delta]$. At either limit, the $\edgesum$ is 0, indicating that all nodes in the resulting global DAG $\globalG$ have equal weights ($=1/n$) at the extrema. We will refer to the term $(\nodesum_\tau - \lambda \cdot \edgesum_\tau)$ as {\em performance graph advantage (PGA)}. Analogous to the advantage function in RL, PGA provides information about the scope for improvement (extra possible rewards) under the current reward function and policy. Depending on the type of RL algorithm (on-policy or off-policy) used for training, PGA can be used along with the episode returns as a bonus term, or in the loss function by minimizing the negative of PGA.

%% FRAMEWORK
\subsection{Framework and Algorithm}
\label{sec:framework}
\begin{figure*}[htbp]
	\centering
	\tikzstyle{io} = [rectangle, rounded corners, 
minimum width=3cm, 
minimum height=1cm,
text centered, 
draw=black, 
fill=white!30]

\tikzstyle{buffers} = [trapezium, 
trapezium stretches=true, % A later addition
trapezium left angle=70, 
trapezium right angle=110, 
minimum width=3cm, 
minimum height=1cm, text centered, 
draw=black, fill=red!20]

\tikzstyle{process} = [rectangle,
minimum width=3cm, 
minimum height=1cm, 
text centered, 
text width=3cm, 
draw=black, 
fill=orange!30]

\tikzstyle{decision} = [circle, 
minimum width=1cm, 
minimum height=1cm, 
text centered, 
draw=black, 
fill=green!30]
\tikzstyle{arrow} = [thick,->,>=stealth]

\tikzset{
    ioBlock/.style={shape=rectangle, rounded corners, align=center, minimum width=5mm,fill=blue!20,text=white},
    trainingBlock/.style={shape=rectangle, rounded corners, align=center, minimum width=1cm,minimum height=3cm,fill=yellow!20},
    arr/.style={line width=2.75pt,-Stealth},
    dagBlock/.style={shape=rectangle, rounded corners, align=center, minimum width=1cm,minimum height=3cm,fill=red!20},
    arr/.style={line width=2.75pt,-Stealth}
    }

\tikzset{
  basic box/.style={
    shape=rectangle, rounded corners, align=center, draw=#1, fill=#1!25},
  header node/.style={
    node family/width=header nodes,
    font=\strut\large\rmfamily,
    text depth=+.3ex, fill=white, draw},
  header/.style={%
    inner ysep=+1.5em,
    append after command={
      \pgfextra{\let\TikZlastnode\tikzlastnode}
      node [header node] (header-\TikZlastnode) at (\TikZlastnode.north) {#1}
      node [span=(\TikZlastnode)(header-\TikZlastnode)]
           at (fit bounding box) (h-\TikZlastnode) {}
    }
  },
  fat blue line/.style={ultra thick, blue}
}

\tikzset{mypic/.style={scale=0.7, transform shape, node distance={10mm}},
	roundnode/.style={circle, draw=teal!60, fill=teal!5, very thick, minimum size=5mm},
  outerbox/.style={shape=rectangle, rounded corners, draw=teal!60},
	squarednode/.style={rectangle, draw=red!60, fill=red!5, very thick, minimum size=5mm},
}

\scalebox{0.8}{
\begin{tikzpicture}[scale=0.8, node distance=2cm, font=\strut\rmfamily]

%% Define the nodes of framework

%% Inputs
\node (specs) [align=center, io] {STL Specifications \\ $\Phi$};
\node (demos) [align=center, io, below of=specs] {Demonstrations \\ $\demoset$};
% Background on inputs

\begin{scope}[on background layer]
    \node[fit = (specs)(demos), basic box = blue,
      header = Task] (task) {};
  \end{scope}

%% Apprenticeship Learning with STL
\node (frontier) [buffers, right of=task, xshift=2cm] {Frontier $\frontier$};

% DAG
  \begin{scope}[mypic,local bounding box=dagsGroup]

    % Nodes
    \node[roundnode](phi1)[right=of frontier, xshift=1.8cm, yshift=10mm]{$\varphi_1$};
    \node[roundnode](phi2)[right=of phi1]{$\varphi_2$};
    \node[roundnode](phi3)[below=of phi1]{$\varphi_3$};
    \node[roundnode](phi4)[below=of phi2]{$\varphi_4$};
    \node[roundnode](phi5)[right=of phi2, yshift=-0.6cm]{$\varphi_5$};

    % Lines
    \draw[->] (phi1.east) -- node[midway, above, sloped] {$\delta_{12}$} (phi2.west);
    \draw[->] (phi1.south) -- node[midway, above, sloped] {$\delta_{13}$} (phi3.north);
    \draw[->] (phi2.east) -- node[midway, above, sloped] {$\delta_{25}$} (phi5.west);

    \node (dags) [outerbox,fit=(phi1)(phi2)(phi3)(phi4)(phi5), label=below:\LARGE{Global Graph $\globalG$}] {};

  \end{scope}

\node (peglearn) [process, align=center, right of=dags, xshift=2cm] {PeGLearn \\ with PGA};
\node (update) [decision, align=center, below of=frontier, yshift=-0.4cm] {Update};
\node (candidate) [buffers, right=of update, below=of peglearn, yshift=0.6cm] {Candidate $\candidate$};
\begin{scope}[on background layer, inner sep=2ex]
    \node (training) [trainingBlock,fit=(frontier)(peglearn)(update)(dags)(candidate),label=above:\textbf{APPRENTICESHIP LEARNING WITH STL MONITORING}] {};
\end{scope}

%% Output
\node (output) [io, align=center, right of=peglearn, xshift=2cm] {Final Reward $R^*$ \\ + Policy $\pi^*$};

%% Add node connections/edges of framework

\draw [arrow] (task) -- (frontier.west);
\draw [arrow] (frontier.east) -- (dags.west);
\draw [arrow] (dags.east) -- (peglearn.west);
% \draw [arrow] (frontier.east) -- (peglearn.west);
\draw [arrow] (peglearn.east) -- (output.west);
\draw [arrow] (peglearn.south) -- node[anchor=west, align=center] {collect \\ rollouts} (candidate.north);
\draw [arrow] (candidate.west) -- (update.east);
\draw [arrow] (update.north) -| (frontier.south);

\end{tikzpicture}
}
	\caption{AL-STL Framework with Performance-Graph Advantage.}
	\label{fig:framework}
\end{figure*}

We now describe our proposed framework, shown in \autoref{fig:framework}, that closes the RL training loop to extract both the reward function and policy that optimally satisfy $\Phi$, resembling apprenticeship learning. The corresponding pseudocode is given in \autoref{alg:AL_STL}. Analogous to the replay buffer in RL, we introduce storage buffers for the reward model: (i) \textit{frontier} $\frontier$ containing the best episode rollouts of the agent so far and (ii) \textit{candidate} $\candidate$ containing the rollouts under the current reward and policy with PGAs. Initially, the frontier is populated with demonstrations (\autoref{alg:AL_STL:front_init}) from which the global DAG $\globalG$ and hence the reward function are extracted via PeGLearn (\autoref{alg:AL_STL:lfd_stl}). RL is performed with the learned rewards and each rollout is associated with its PGA, that is optimized either in the episode returns or in the loss. Upon updating the policy, multiple rollouts are collected in the candidate buffer (loop on \autoref{alg:AL_STL:get_candidate}), and the frontier is updated by comparing the overall PGAs of both the frontier and candidate based on a strategy (\autoref{alg:AL_STL:update_strat}) that we describe in \autoref{sec:update_strats}. This loop, shown by the yellow background in \autoref{fig:framework}, continues for a finite number of cycles or until the frontier can no longer be updated. At this stage, the reward and policy representing the frontier optimally satisfy $\Phi$, which we discuss in \autoref{sec:convergence}.

%%%%% EXTRACTING BETTER POLICY ALGORITHM %%%%%
\begin{algorithm}[htbp]
% \SetAlgoLined
\DontPrintSemicolon
\SetKwInput{Input}{Input}
\Input{
	$\demoset \defeq$ demonstration set; $\Phi \defeq$ specification set \\
	} 
\KwResult{
	$R^* \defeq$ reward function; $\pi^* \defeq$ a policy 
	}
% \BlankLine

\Begin{
	$\frontier \leftarrow \demoset$ \tcp*{Initialize frontier} \label{alg:AL_STL:front_init}
	$converged = \bot$ \;

	\While{$\neg converged$}{
		$R \leftarrow$ \texttt{PeGLearn}$(\frontier, \Phi)$ \tcp*{reward function from trajectories in $\frontier$} \label{alg:AL_STL:lfd_stl}
		
		$\candidate \leftarrow \emptyset$ \tcp*{Initialize the candidate}

		$\pi \leftarrow $ perform RL with \texttt{PGA} \; \label{alg:AL_STL:rl_pga}
		
		\tcp{Rollout $k$ trajectories from $\pi$ and add them to $\candidate$}
		\For{$i \leftarrow 1$ \KwTo $k$}{\label{alg:AL_STL:get_candidate}
			$\tau_i \leftarrow \langle (s_t, a_t \sim \pi(s_t)) \rangle_{t=0}^{T}$ \;
			$\candidate \leftarrow \candidate \cup \tau_i$ \; 
		}

		$converged \leftarrow \texttt{Update}(\candidate, \frontier$) \; \label{alg:AL_STL:update_strat}
	}
	\Return $R^*=R, \pi^*=\pi$ \;
}
\caption{STL-Guided Apprenticeship Learning\label{alg:AL_STL}}
\end{algorithm}

\subsubsection{Frontier Update Strategies}
\label{sec:update_strats}

$\frontier$ and $\candidate$ contain rollouts that are associated with their PGAs. We define an operator $\odot \in \{\texttt{min}, \texttt{max}, \texttt{mean}\}$, and therefore, the metrics $\Fmet \doteq \odot \{\texttt{PGA}(\tau) | \tau \in \frontier \}$ and $\Cmet \doteq \odot \{\texttt{PGA}(\tau) | \tau \in \candidate \}$. To update the frontier, we propose the {\em strategic merge} operation as: 
\begin{enumerate}[label=(\alph*)]
	\item We first compare whether $\Cmet > \Fmet$, i.e., the trajectories with the newly-explored PGA are better than the current best trajectories in $\frontier$. The operator $\odot$ acts as the criterion for filtering bad-performing trajectories.
	\item If so, we retain the trajectories in $\frontier \cup \candidate$ whose PGAs are greater than $\Fmet$ and discard the others; resulting trajectories form the new $\frontier$. Formally, this is given by $\frontier \leftarrow \{\tau | \texttt{PGA}(\tau) > \Fmet, \tau \in \frontier \cup \candidate\}$. That is, quality of the worst $\odot$ criteria-based rollouts in $\frontier$ is improved.
	\item Otherwise, $\frontier$ already has the best trajectories so far and is left unaltered. If the statistic $\odot$ for $\frontier$ and $\candidate$ are similar (i.e., their difference is below some threshold) upon sufficient exploration, then convergence is achieved.
\end{enumerate}

In theory, with unbounded memory, the frontier would be able to keep all the best-performing trajectories. For practical implementations, both buffers are bounded (say $p$), so we keep the top-$p$ trajectories in the frontier in our experiments. The {\em strategic merge} is not the only way to maintain the buffer, however, it offers some performance guarantees as we show in \autoref{sec:convergence}. One could consider a na\"ive approach of simply merging all the trajectories in both buffers without any filtering criteria. Alternately, one could also replace all the trajectories in $\frontier$ with those in $\candidate$, which also exhibits monotonic improvement in the RL policy.

\subsubsection{Policy Improvement Analysis}
\label{sec:convergence}

In order to analyze \autoref{alg:AL_STL} and show policy improvement, we make certain assumptions about the task and RL models:
\begin{enumerate}[label=(\alph*)]
	\item The specifications accurately represent the task. 
	\item The task can be completed, regardless of optimal behavior, with the given MDP configurations and task specifications. That is, our algorithm requires {\em at least one} demonstration that can satisfy all specifications, but is not required to be optimal.
	\item The RL model used to train the agent always has an active exploration component (stochastic policy or an exploration rate) to cover the MDP spaces. This not only helps in discovering new policies, but also helps learn more accurate reward models. Theoretically, with infinite timesteps, the RL agent will have fully explored the environment spaces to find the optimal policy \cite{sutton_reinforcement_2018}. In practice, the timesteps are set to a large finite value for majority coverage of the spaces.
\end{enumerate}

Here, we describe how the {\em strategic merge} functionality exhibits policy improvement. From \autoref{sec:update_strats}, the new $\frontier$ contains the set of trajectories given by $\frontier = \{\tau | \texttt{PGA}(\tau) > \Fmet, \tau \in \frontier \cup \candidate\} $. For the purpose of this proof, we will consider $\odot$ to be the \texttt{mean}. Then, $\Fmet = \frac{\sum_{\tau \in \frontier} \pga(\tau)}{\abs{\frontier}}$ and $\Cmet = \frac{\sum_{\tau \in \candidate} \pga(\tau)}{\abs{\candidate}}$. We know that the $\frontier$ is updated in the $\texttt{Update}$ function when $\Cmet > \Fmet$. Let $\Fmet'$ be the mean of the intermediate set $\frontier' = \frontier \cup \candidate$. Then,
\begin{align}
	\Fmet' &= \frac{\sum_{\tau \in \frontier'} \pga(\tau)}{\abs{\frontier'}} =  \frac{\sum_{\tau \in \frontier} \pga(\tau) + \sum_{\tau \in \candidate} \pga(\tau)}{\abs{\frontier} + \abs{\candidate}} \nonumber \\
	 &= \frac{\abs{\frontier} \Fmet + \abs{\candidate} \Cmet}{\abs{\frontier} + \abs{\candidate}} = \Fmet + \frac{\abs{\candidate}k}{\abs{\frontier} + \abs{\candidate}} 
	\label{eq:merge}
	\intertext{since $\Cmet > \Fmet$, we can write this as $\Cmet = \Fmet + k$, where $k > 0$} \nonumber
\end{align}
Now, let $\Fmet''$ be the new mean after filtering $l < (\abs{\frontier} + \abs{\candidate})$ rollouts whose $\pga \leq \Fmet$ in the merged set $\frontier'$. 
\begin{align}
	\Fmet'' &= \frac{\abs{\frontier'}\Fmet' - \Sigma \{\pga(\tau) |\tau \in \frontier', \pga(\tau) \leq \Fmet\}}{\abs{\frontier'}-l} \nonumber \\
	\intertext{In the worst case, all $l$ trajectories have PGAs at most $\Fmet$.} \nonumber
	\Fmet'' &\geq \frac{\abs{\frontier'}\Fmet' - l\Fmet}{\abs{\frontier'}-l} = \frac{(\abs{\frontier} + \abs{\candidate})\Fmet' - l\Fmet}{\abs{\frontier} + \abs{\candidate} - l} \nonumber \\
	\Fmet'' &\geq \Fmet + \frac{\abs{\candidate}k}{\abs{\frontier} + \abs{\candidate} - l} \label{eq:discard} \quad \text{(substituting from \autoref{eq:merge})}
\end{align}
As the cardinalities of both buffers $\frontier$ and $\candidate$ are non-zero, the denominator $(\abs{\frontier} + \abs{\candidate} - l) > 0$. Thus, in \autoref{eq:discard}, the second term is always positive, which proves that our algorithm improves the policy and reward in each cycle, under the exploration assumption. A special case of \autoref{eq:discard} is when $\frontier$ is completely replaced by $\candidate$, i.e., when all $l$ trajectories belong to $\frontier$, then $l=\abs{\frontier}$ and so, $\frontier$ inherits the higher mean from $\candidate$. The frontier remains unchanged when the demonstration set is optimal or the rollouts in $\frontier$ at the end of each training cycle are optimal.

We can apply similar reasoning to the other operators for $\odot$. In the case of $\texttt{max}$, the frontier's maximum value will always inherit the maximum (i.e., the best rollouts) from the candidate. For $\texttt{min}$, only the least-performance trajectories are discarded and the second-to-least ones are updated to be the new minimum in $\frontier$. Since the upper-bound of $\frontier$ is $n\Delta$, our method keeps improving the policy towards this maximum. {\em However, this does not guarantee that the maximum value can always be achieved due to several factors: conflicting specifications causing trade-offs, environment configuration, solvability of the MDP under the given specifications, etc.}

\subsubsection{Effect of Affine Transformations to Rewards}
\label{sec:policy_invariance}

In practice RL is sensitive to the hyperparameter settings, environment stochasticity, scales of rewards and observations, and other algorithmic variances \cite{islam2017reproducibility}. Hence, in our experiments, we normalize observations and rewards using affine transforms. However, applying affine transformations to the reward function does not alter the optimal policy\cite{ng_policy_invariance}. We also prove this for basic scaling and shifting of the rewards by a constant amount in \autoref{app:proofs}.

% Experiments
\section{Experiments}
\label{sec:exps}

Our proposed framework is evaluated on a diverse set of PyBullet or MuJoCo physics simulation-based robot manipulation tasks (\autoref{fig:envs}): (i) reaching a desired pose with the end-effector, (ii) placing an object at a desired location, and (iii) opening doors. In all our experiments, the task specifications only monitor the observed states and so, the rewards are a function of just the states. The STL specifications are evaluated using \texttt{RTAMT} \cite{rtamt_stl}. The reward function is modeled by regression with either fully connected neural networks or Gaussian processes, implemented in PyTorch. Our framework is based on the Stable-Baselines3\footnote{\url{https://github.com/DLR-RM/stable-baselines3}\label{ft:sb3}} implementations of RL algorithms. All simulations are performed on an Ubuntu desktop with an {\em Intel\textregistered Xeon 8-core CPU} and {\em Nvidia Quadro RTX 5000 GPU}. For each environment, $m=5$ demonstrations are generated by training an appropriate RL agent under an expert dense reward function. {\em In these domains, every RL episode features a unique/randomized target and hence the collected demonstrations are also unique (i.e., the states do not overlap. Additionally, these simulations implicitly model noise in the environment which make it challenging to provide optimal trajectories.)} Details of all hyperparameters can be found in \autoref{app:experiments}. In all tasks, unless explicitly stated, the frontier is updated by completely replacing its contents with the candidate (i.e., special case of \autoref{eq:discard}) and we set $\abs{\frontier} = \abs{\candidate} = 5$.

% Single-column figure
\begin{figure}[htbp]
    \centering
    \subfloat[Panda Pose Reaching]{\includegraphics[width=0.4\linewidth, height=1.2in]{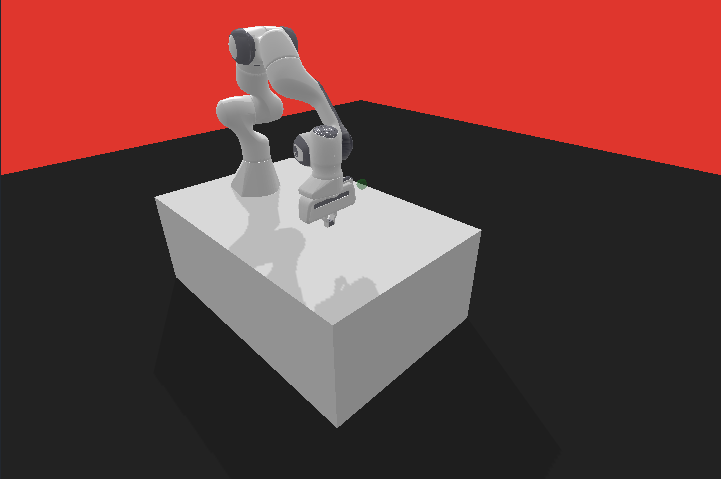}} \hfil
    \subfloat[Needle Pose Reaching]{\includegraphics[width=0.4\linewidth, height=1.2in]{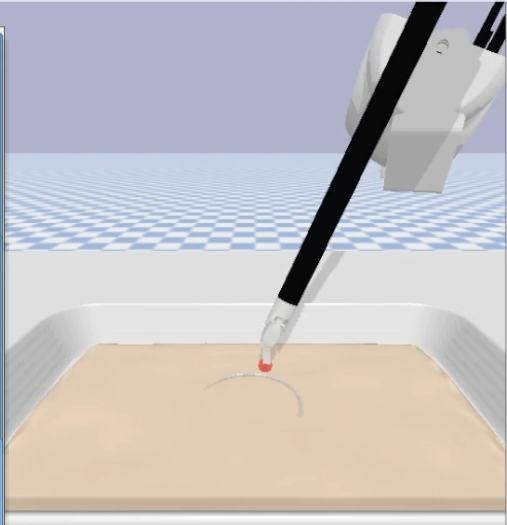}}

    \subfloat[Picking and Placing]{\includegraphics[width=0.4\linewidth, height=1.2in]{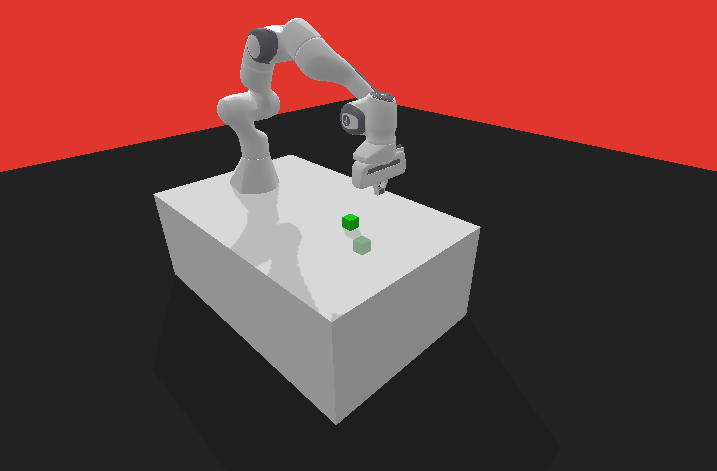}} \hfil
    \subfloat[Door Opening]{\includegraphics[width=0.4\linewidth, height=1.2in]{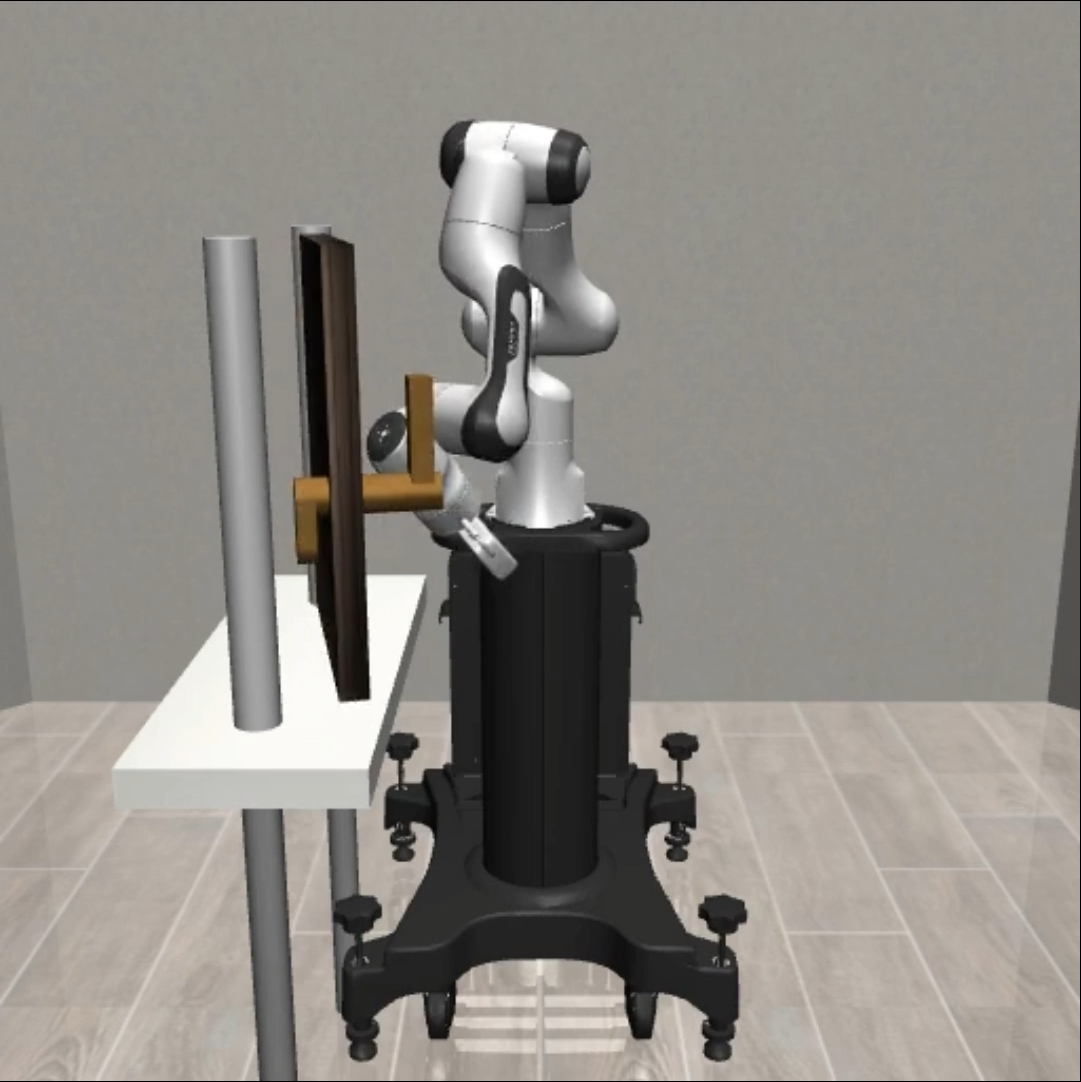}}
    \caption{Overview of the robot manipulator environments.}
    \label{fig:envs}
\end{figure}

\subsection{Task - Reaching Pose}

The end-effector of a Franka Emika Panda robot \cite{pandagym} is required to reach the target pose as quickly as possible, the specifications for which are given as: $\varphi_1 \defeq \ev(d < \delta)$ and $\varphi_2 \defeq \alw(t < T)$, where $d$ is the $l^2$-norm of the difference between the end-effector and target poses, $\delta$ is a small threshold to determine success, and $T$ is the desired time in which the target must be achieved. For evaluation on a more precise environment, we use a surgical robot environment - SURROL \cite{surrol} that is built on the da Vinci Surgical Robot Kit \cite{dVRK}. In this common surgical task, a needle is placed on a surface and the goal is to move the end-effector towards the needle center. The specifications for this task follow the same template above, however, the threshold is very small, i.e., $\delta = 0.025$, requiring highly precise movements.

The rewards were modeled with a 2-layer neural network and scaled to $[-1,1]$. The RL agent used SAC \cite{sac} with hindsight experience replay (HER) \cite{her_replay} and was trained for 5 cycles spanning $2\cdot 10^5$ timesteps for Panda-Reach and $2.5\cdot 10^5$ steps for Needle-Reach. To validate reproducibility, the training and evaluation was performed over 5 random seeds using the same 5 demonstrations. The results for both these environments are shown in \autoref{fig:reaching_tasks}. The first column shows the PGA over time or cycles (note the scale of $y$-axis). The learned policies in both environments achieve have PGA $\approx 2$ since there are 2 specifications. The second column represents the specification weights. In the surgical task, the final weights are uniform as desired since the room for error in completing the task is very small, while the Panda task has a larger threshold for completion which affects the resolution of the smooth STL semantics, though all tasks are completed successfully.

\begin{figure}[htbp]
    \centering
    \subfloat[Panda-Reach\label{fig:pandareach}]{\includegraphics[width=\linewidth, height=1.4in]{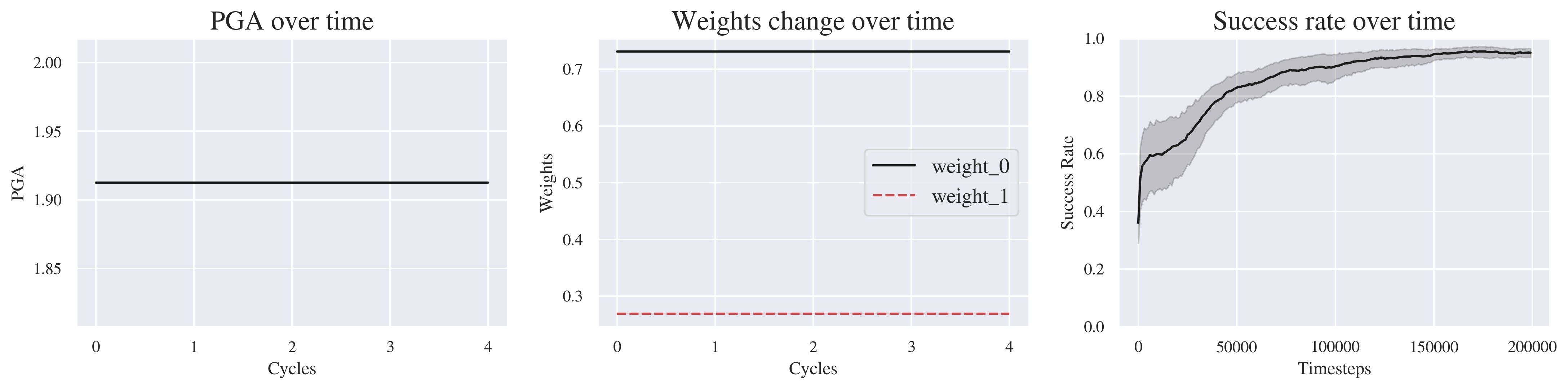}}

    \subfloat[Needle-Reach\label{fig:needlereach}]{\includegraphics[width=\linewidth, height=1.4in]{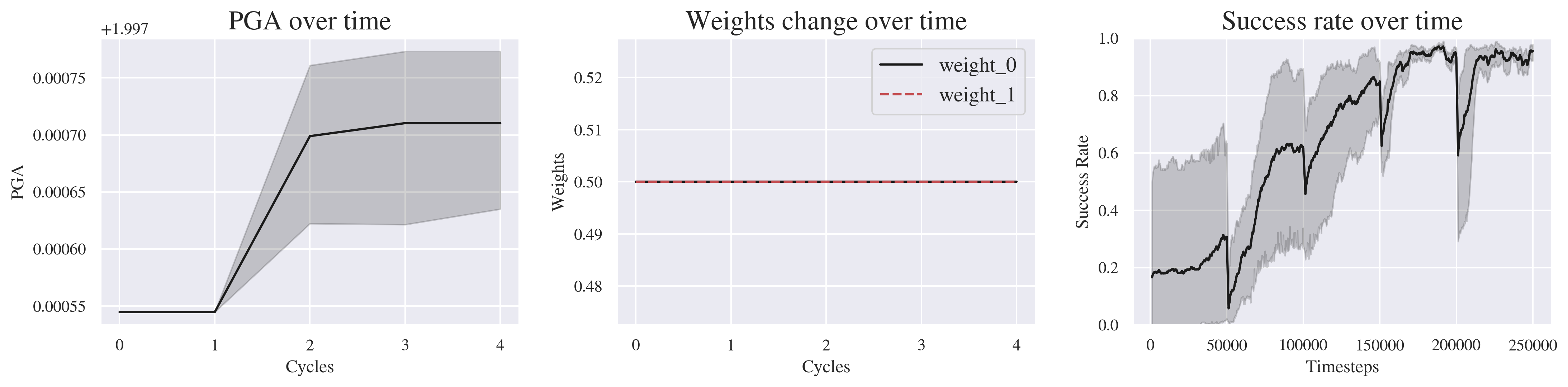}} 
    \caption{Summary of training and evaluations for the pose-reaching tasks.}
    \label{fig:reaching_tasks}
\end{figure}

In both tasks, using just 5 demonstrations, AL-STL achieved over 99\% mean success rate in both, training (right figures) and evaluations; 5 random seeds were used for evaluations. For Needle-Reach, the baselines \cite{surrol, surrol_baseline} that used BC and IRL, required 100 expert demonstrations. It is shown in \cite{surrol_baseline} that, when the number of demonstrations is reduced to just 10, which is still \textbf{2x} larger than ours, the success rate drops drastically. For Panda-Reach, the authors of \cite{haldar2022watch} show that imitation learning outperforms adversarial IRL techniques when each method uses 50 demonstrations, though both eventually learn to succeed in the task. This however is still \textbf{10x} more than the amount of samples required by our work.

\subsection{Task - Placing Cube}

Here, a Panda robot is required to pick up a cube on a table and place it at the desired location\cite{pandagym}. Only 4 of the 5 demonstrations were successful. The specifications are also similar to the Reach tasks, i.e., eventually the distance between the cube and desired pose is below a threshold and the robot must do so as quickly as possible. The reward function was modeled with Gaussian process and scaled in $[-3, 3]$, for better visualization (note that scaling does not affect the optimal policy as described in \autoref{sec:policy_invariance}). The RL agent used TQC \cite{tqc} with HER and was trained for $10^7$ timesteps distributed across 5 cycles, and achieved a training success rate of $98\%$ (\autoref{fig:panda_pandp}). The resulting policy was evaluated over 20 episodes across 5 random seeds (=100 test configurations), achieving a success rate of $\mathbf{96\%}$ in the test trials. From \autoref{fig:panda_pandp}, we see that the algorithm converges to a high success rate after just 3 cycles.

The task specification is significantly challenging because it only describes that the cube be placed at the desired pose. In other words, the RL agent must learn to reach the cube, grasp and move to the desired location while holding the cube. It must learn this sequence of elementary behaviors from just the handful of demonstrations provided. Another remarkable finding in our work (shown in the supplemental video), is that the policy learns to (i) correctly \textit{pick} the cube and place it at the target whenever the target height is above the table and (ii) {\em push/drag} the cube when the target is on the same table surface. This shows that our algorithm combines RL exploration and graph advantage to possibly learn {\em acceptable} behaviors that were not observed before. The number of demonstrations used for this task in the baselines that achieved comparable success rates, are: 100 for MCAC \cite{mcac}, between 4 and 16 for OPRIL \cite{opril}, 20 for goalGAIL \cite{goalGail} and 50 for ROT \cite{haldar2022watch}.

\subsection{Task - Opening Door}

A Panda robot, mounted on a pedestal, is required to open a door\cite{robosuite}. Only 3 of the 5 demonstrations were successful. The task is successful if the door hinge is rotated beyond $\theta=0.3$rad. The task specifications consist of (a) reaching the door handle and (b) rotating the hinge beyond $\theta$. The elementary behaviors to be learned are: reaching the door handle (similar to the target pose-reaching tasks), turning the handle to unlock the door and pulling to open the door. This is a non-trivial task for expert reward design as it must capture all these elementary behaviors and compose them sequentially. The reward function for AL-STL was modeled with a 3-layer neural network. Since this is a more challenging task, the frontier was updated with {\em strategic merge}, and the size of reward buffers were set to $20$ to collect more rollouts. The RL agent used TQC and was trained for $5\cdot10^6$ steps across 25 cycles to achieve a success rate of $\mathbf{98\%}$ (\autoref{fig:panda_door}). The resulting policy was evaluated over 20 episodes across the same 5 test seeds (=100 new test configurations) used in \cite{robosuite} for comparison purposes. In the test scenarios, it achieved a success rate of $\mathbf{100\%}$.

\begin{figure}[htbp]
    \centering
    \subfloat[Cube-Placing\label{fig:panda_pandp}]{\includegraphics[width=0.5\linewidth,height=1.7in]{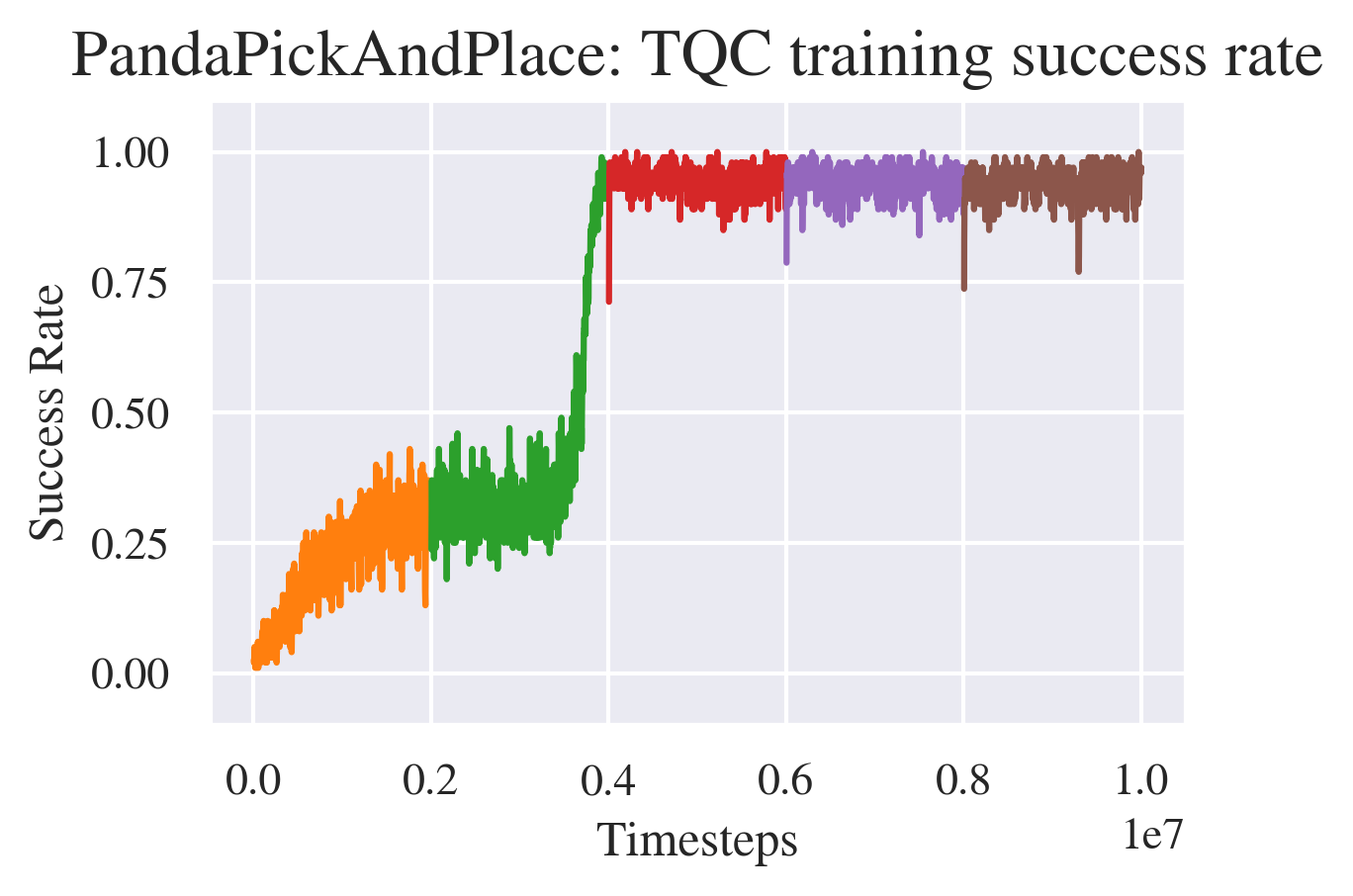}} 
    \subfloat[Door Opening\label{fig:panda_door}]{\includegraphics[width=0.5\linewidth,height=1.7in]{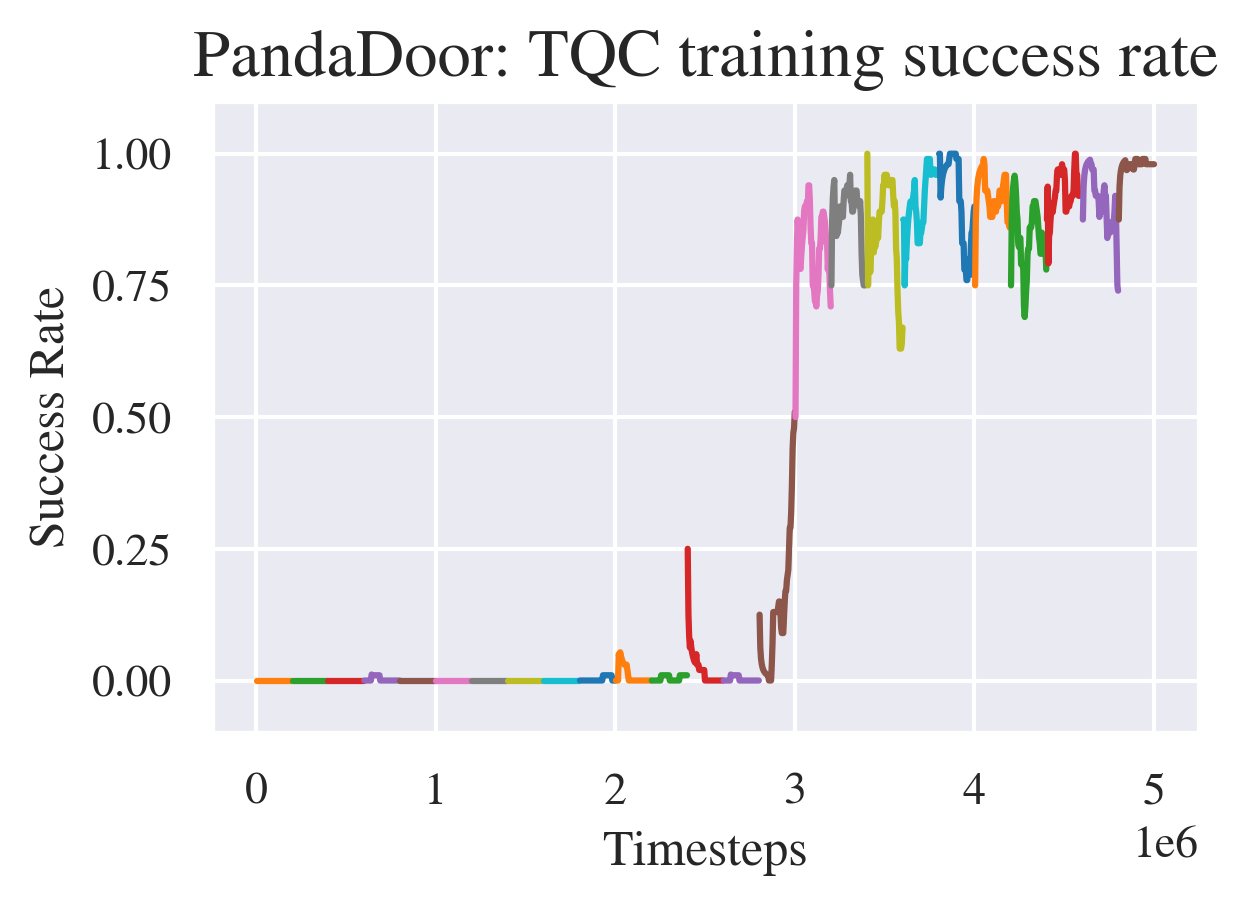}} 
    \caption{Summary of RL training for the object manipulation tasks.}
    \label{fig:panda_pandp_door}
\end{figure}

We compare our work with two state-of-the-art baselines MCAC \cite{mcac} and OPIRL \cite{opril}, which has shown to outperform maximum entropy and adversarial IRL-based methods. While both these methods successfully complete this task, MCAC used 100 demonstrations, while OPIRL used between 4 and 16. OPIRL had significantly more variance (i.e., unstable learning) with 4 demonstrations compared to using 16/ Furthermore, in OPRIL, the method uses a substantially large reward buffer size of $2\cdot 10^6$ to compensate for the limited demonstrations, while ours uses $2\cdot 10^4$ (i.e., $\abs{\frontier}=\abs{\candidate}=20$, each trajectory of length 500). This indeed shows our method is more efficient compared to IL and IRL. 

A summary of comparisons with baselines based on sample complexity for all environments is given in \autoref{tab:comparisons}. The baselines are the methods in each category of IL and IRL, as discussed above, that successfully complete the task and require as few demonstrations as possible.
\begin{table}[!t]
	\caption{Comparisons with baselines for environments.\label{tab:comparisons}}
	\centering
    \begin{tabular}{cccc}
      \toprule
      TASK &  \multicolumn{3}{c}{\# DEMOS}\\
      & OURS & IRL & IL (BC) \\
      \midrule
      Panda Reach & 5 & 50 & 50 \\
      Needle Reach & 5 & 100 & 100 \\
      Panda Pick-And-Place & 5 & 16 & 20 \\
      Panda Door & 5 & 16 & 100 \\
      \bottomrule
	\end{tabular}
\end{table}

% Conclusion
\section{Conclusion}

We proposed a novel extension to the LfD-STL framework by introducing closed-loop learning to infer both the rewards and control policy simultaneously. We proposed a graph-based formalism to provide a succinct representation of multiple non-Markovian (temporal) task specifications. These graphs provided quantitative and interpretable assessments of agent behaviors. The graph advantage metric guided the agent's learning process to maximally satisfy the task specifications and perform optimal trade-offs. Through experiments on various robotic manipulation domains, we have shown the effectiveness of our approach in terms of sample efficiency and discussed how it outperforms several state-of-the-art methods. For future, we propose to investigate (i) diversity in demonstrations and how it affects the learning process and solvability of MDPs, (ii) probabilistic guarantees and verification, and (iii) task generalization and sim2real transfer-learning.

\clearpage

%===============================================================================

\bibliography{surveys,learning,cps,tools}  % .bib

\clearpage
\section*{Appendix}
\appendix
\section{Signal Temporal Logic}
\label{app:stl}

\begin{definition}[Quantitative Semantics for Signal Temporal Logic]%
\label{def:quantitative}
    Given an algebraic structure $(\oplus,\otimes,\top,\bot)$, we define the
    quantitative semantics for an arbitrary signal $\sig$ against an STL formula
    $\varphi$ at time $t$ as in \autoref{tab:stl_quant}.
	\begin{table}[htbp]
	\caption{Quantitative Semantics of STL}
	\label{tab:stl_quant}
	\centering
    \begin{tabular}{cc}
      \toprule
      $\varphi$ &  $\robustness{\varphi}{t}$ \\
      \midrule
      $\mathit{true}$/$\mathit{false}$ & $\top$/$\bot$ \\
      $\mu$                       & $f(\sig(t))$ \\
      $\neg \varphi$              & $-\robustness{\varphi}{t}$ \\
      
      $\varphi_1 \wedge \varphi_2$ &
      $\otimes(\robustness{\varphi_1}{t},\robustness{\varphi_2}{t})$ \\
      
      $\varphi_1 \vee \varphi_2$ & 
      $\oplus(\robustness{\varphi_1}{t},\robustness{\varphi_2}{t})$ \\
      
      $\alw_I(\varphi)$ &  $\otimes_{\tau\in t+I}(\robustness{\varphi}{\tau})$ \\
      
      $\ev_I(\varphi)$ &  $\oplus_{\tau\in t+I}(\robustness{\varphi}{\tau})$ \\

      $\varphi \until_I \psi$ & $\oplus_{\tau_1\in t+I}
      (\otimes(\robustness{\psi}{\tau_1},\otimes_{\tau_2\in[t,\tau_1)}(\robustness{\varphi}{\tau_2}))$ \\
      \bottomrule
      % \\

	\end{tabular}
\end{table}
\end{definition}

A signal satisfies an STL formula $\varphi$ if it is satisfied at time $t=0$. Intuitively, the quantitative semantics of STL represent the numerical distance of ``how far'' a signal is away from the signal predicate. For a given requirement $\varphi$, a demonstration or policy $d$ that satisfies it is represented as $d \models \varphi$ and one that does not, is represented as $d \not\models \varphi$. In addition to the Boolean satisfaction semantics for STL, various researchers have proposed quantitative semantics for STL, \cite{fainekos_robustness_2009,jaksic_quantitative_2018} that compute the degree of satisfaction (or \textit{robust satisfaction values}) of STL properties by traces generated by a system. In this work, we use the following interpretations of the STL quantitative semantics: $\top = +\infty$, $\bot = -\infty$, and $\oplus = \max$, and $\otimes = \min$, as per the original definitions of robust satisfaction proposed in \cite{fainekos_robustness_2009,donze_robust_2010}.

\section{Derivations and Proofs}
\label{app:proofs}

As mentioned in the main paper, we show that applying affine transformations to the reward function do not change the optimal policy. Particularly, we are concerned with scaling and shifting the rewards by a constant factor.

\begin{lemma}
    The optimal policy is invariant to affine transformations in the reward function.
    \label{lemma}
\end{lemma}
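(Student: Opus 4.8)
The plan is to show that an affine transformation $R' = c R + d$ of the reward, with scaling constant $c > 0$ and shift $d \in \Reals$, rescales the value function of \emph{every} policy by the same positive factor and adds the same policy-independent constant, so that the $\operatorname*{argmax}$ over policies is unchanged.

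First I would fix a policy $\pi$ and compute its (discounted) value function under $R'$. A rollout of $\pi$ from $s_0 = s$ (\autoref{def:rollout}) produces a reward sequence $r_0, r_1, \dots$ under $R$ and the corresponding sequence $c r_0 + d, c r_1 + d, \dots$ under $R'$. By linearity of expectation,
\[
V^{\pi}_{R'}(s) \;=\; \Expect_{\tau \sim \pi}\!\left[\textstyle\sum_{t\ge 0}\gamma^t (c r_t + d)\,\middle|\, s_0 = s\right] \;=\; c\,\Expect_{\tau \sim \pi}\!\left[\textstyle\sum_{t\ge 0}\gamma^t r_t\,\middle|\, s_0 = s\right] + d\textstyle\sum_{t\ge 0}\gamma^t \;=\; c\, V^{\pi}_{R}(s) + \frac{d}{1-\gamma}.
\]
The additive term $d/(1-\gamma)$ depends on neither $\pi$ nor $s$; for a finite-horizon rollout of length $L$ (as in \autoref{def:rollout}) it is simply replaced by $d\sum_{t=0}^{L}\gamma^t$, again a fixed constant. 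The identical computation for state-action values yields $Q^{\pi}_{R'}(s,a) = c\,Q^{\pi}_{R}(s,a) + d/(1-\gamma)$.

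Second, because $c > 0$, the map $x \mapsto c x + d/(1-\gamma)$ is strictly increasing, so for any two policies $\pi_1,\pi_2$ and any state $s$ we have $V^{\pi_1}_{R'}(s) \ge V^{\pi_2}_{R'}(s) \iff V^{\pi_1}_{R}(s) \ge V^{\pi_2}_{R}(s)$. Hence the set of policies maximizing the return under $R'$ coincides with the set maximizing it under $R$, i.e.\ $\operatorname*{argmax}_{\pi} V^{\pi}_{R'} = \operatorname*{argmax}_{\pi} V^{\pi}_{R}$. Equivalently, one can argue through the Bellman optimality operator: the greedy action at each state is $\operatorname*{argmax}_a Q(s,a)$, and a strictly increasing affine map commutes with $\operatorname*{argmax}$, so the greedy (hence optimal) policy is unchanged.

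The one point I would be careful about is the sign of $c$: positivity is essential, since a negative multiplier turns the maximization into a minimization and flips the optimal policy, so the lemma is understood for $c > 0$ (pure shifts being the case $c = 1$). A minor bookkeeping item is checking that the shift genuinely accumulates to a policy- and state-independent constant over the whole discounted horizon, which is immediate here because each transition contributes exactly $d$ prior to discounting. Beyond that the argument is routine, so I do not anticipate a substantive obstacle.
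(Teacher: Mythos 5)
Your proposal is correct, and it reaches the same key computation as the paper — under scaling the value/Q function scales by $c$, and under a shift the discounted constant accumulates to $d/(1-\gamma)$ — but the argument is organized differently. The paper treats scaling and shifting as two separate cases, derives $Q'(s,a)=c\,Q(s,a)$ and $Q'(s,a)=Q(s,a)+c/(1-\gamma)$ respectively from the return definition, and then substitutes into the Bellman optimality equation to check that it reduces to the original one, taking this consistency as the statement of policy invariance; the final step (that an increasing affine map of $Q^*$ leaves the greedy argmax, and hence the optimal policy, unchanged) is left implicit. You instead handle the full affine map $R'=cR+d$ in one pass and argue at the level of \emph{all} policies: every $V^{\pi}$ is transformed by the same strictly increasing affine map with a policy- and state-independent constant, so the ordering of policies, and therefore the argmax, is preserved. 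Your route is slightly more general (one case instead of two) and makes the invariance conclusion explicit rather than resting on "the Bellman equation holds"; the paper's route stays closer to the Bellman fixed-point machinery that RL readers expect and makes the transformed $Q$ values available explicitly, which it also uses to comment on reward normalization in the experiments. Your caveat that $c>0$ is essential matches the paper's assumption of scaling by a positive constant, and your finite-horizon remark is a harmless bonus; I see no gap.
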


\begin{proof}[Proof Sketch]
    From \cite{sutton_reinforcement_2018}, we have the definition of the $Q$ function as follows, for the untransformed reward function $R$: 
    \begin{align}
        Q(s,a) &\doteq \Expect \left[\sum_{k=0}^{\infty}\gamma^k \cdot R(s,a)_{t+k+1} | S_t=s, A_t=a\right] \label{eq:returns}\\
        Q(s,a) &\doteq R(s,a) + \gamma \sum_{s'}P(s,a,s') \max_{a'}Q(s',a') \label{eq:bellman}
    \end{align}

    We consider two cases of reward function affine transformations in our work: (a) scaling by a positive constant and (b) shifting by a constant. In both these cases, our objective is to express the new $Q$ function in terms of the original. Note that we abbreviate $R(s,a)$ to just $R$ for simplicity.

    \paragraph*{Case (a): Scaling $R$ by a positive constant}
    Let the scaled reward function be defined as $R' = c \cdot R, c > 0$. The new $Q$ function is then
    \begin{align*}
        Q'(s,a) &\doteq \Expect \left[\sum_{k=0}^{\infty}\gamma^k \cdot R'_{t+k+1} | S_t=s, A_t=a\right] \\
        Q'(s,a) &= \Expect \left[\sum_{k=0}^{\infty}\gamma^k \cdot c \cdot R_{t+k+1} | S_t=s, A_t=a\right] \\
        Q'(s,a) &= c \cdot \Expect \left[\sum_{k=0}^{\infty}\gamma^k \cdot R_{t+k+1} | S_t=s, A_t=a\right] \\
        Q'(s,a) &= c \cdot Q(s, a)
    \end{align*}
    Thus we see that the new $Q$ function scales with the scaling constant.

    From \autoref*{eq:bellman} and by later substituting for $Q'$ from the above result, we have,
    \begin{align*}
        Q'(s,a) &\doteq R'(s,a) + \gamma \sum_{s'}P(s,a,s') \max_{a'}Q'(s',a') \\
        c \cdot Q(s,a) &= c \cdot R(s,a) + \gamma \sum_{s'}P(s,a,s') \max_{a'}(c \cdot Q(s',a'))\\
        c \cdot Q(s,a) &= c \cdot R(s,a) + c\gamma \sum_{s'}P(s,a,s') \max_{a'} \cdot Q(s',a')\\
        Q(s,a) &= R(s,a) + \gamma \sum_{s'}P(s,a,s') \max_{a'} \cdot Q(s',a')
    \end{align*}
    Thus the Bellman equation holds indicating that the policy is invariant to scaling by a positive constant.

    \paragraph*{Case (b): Shifting $R$ by a constant}
    Let the shifted reward function be defined as $R' = R+c$. The new $Q$ function is then
    \begin{align*}
        Q'(s,a) &\doteq \Expect \left[\sum_{k=0}^{\infty}\gamma^k \cdot R'_{t+k+1} | S_t=s, A_t=a\right] \\
        Q'(s,a) &= \Expect \left[\sum_{k=0}^{\infty}\gamma^k \cdot (R_{t+k+1}+c) | S_t=s, A_t=a\right] \\
        Q'(s,a) &= \Expect \left[\sum_{k=0}^{\infty}\gamma^k \cdot R_{t+k+1} | S_t=s, A_t=a\right] + \sum_{k=0}^{\infty}\gamma^k c \\
        Q'(s,a) &= Q(s,a) + \frac{c}{1-\gamma}
    \end{align*}
    Thus we see that the new $Q$ values get shifted by the constant.

    From \autoref*{eq:bellman} and by later substituting for $Q'$ from the above result, we have,
    \begin{align*}
        Q'(s,a) &\doteq R'(s,a) + \gamma \sum_{s'}P(s,a,s') \max_{a'}Q'(s',a') \\
        Q(s,a) + \frac{c}{1-\gamma}  &= R(s,a) + c \\ &+ \gamma \sum_{s'}P(s,a,s') \max_{a'}\left(Q(s',a') + \frac{c}{1-\gamma}\right)\\
        Q(s,a) + \frac{c}{1-\gamma}  &= R(s,a) + c \\ &+ \gamma \sum_{s'}P(s,a,s') \max_{a'}Q(s',a') \\ &+ \gamma \sum_{s'}P(s,a,s') \frac{c}{1-\gamma}\\
        Q(s,a) + \frac{c}{1-\gamma}  &= R(s,a) + \gamma \sum_{s'}P(s,a,s') \max_{a'}Q(s',a') \\ &+ c + \frac{c\gamma}{1-\gamma} \\
        Q(s,a) &= R(s,a) + \gamma \sum_{s'}P(s,a,s') \max_{a'} \cdot Q(s',a')
    \end{align*}
    Thus the Bellman equation holds indicating that the policy is invariant to shifting by a constant.

\end{proof}

Therefore, any combination of scaling or shifting does not affect the optimal policy in our work. Similarly, the optimal policy is shown to be invariant towards reward shaping with potential functions \cite{ng_policy_invariance}.

\section{Experiment Details}
\label{app:experiments}

This section describes additional details about the experiments such as the STL task specifications, hyperparameters, training and evaluation results.

\subsection{Task - Discrete-Space Frozenlake}
We make use of the $Frozenlake$ (FL) deterministic environments from OpenAI Gym \cite{openai_gym} that consist of a grid-world of sizes 4x4 or 8x8 with a reach-avoid task. Informally, the task specifications are (i) eventually reaching the goal, (ii) always avoid unsafe regions and (iii) take as few steps as possible. In these small environments $m=5$ demonstrations of varying optimality are manually generated. We use A2C as the RL agent and show the training results in \autoref{fig:frozenlake}. The left figures show the statistics of the rollout PGAs and the evolution of weights over time. The right figures show the rewards accumulated and episode lengths.

We see from the left figures, that initially, the non-uniform weights of specifications correspond to the suboptimal demonstrations. And over time, the weights all converge to $1/3$ indicating that there are no edges in the final DAG, while the PGAs of rollouts from the final policy are maximum, as hypothesized. Since the environments are deterministic, the final policy achieve a 100\% success rate. Since the task can be achieved even with IRL-based methods, we compare the amount of demonstrations required. Under identical conditions, the minimum number of demonstrations used by MCE-IRL are 50 for 4x4 grid and 300 for 8x8 grid. The algorithm in \cite{wenchao_safety_al} uses over 1000 demonstrations in the 8x8 grid, even though they use temporal logic specifications similar to ours. {\em This clearly suggests that the choice of the reward inference algorithm plays a significant role in sample complexity}. This is due to the unsafe regions being scattered over the map, requiring the desirable {\em dense} features to appear very frequently. 

\begin{figure}[htbp]
    \centering
    \subfloat[FL4x4 Weights]{\includegraphics[width=0.5\linewidth]{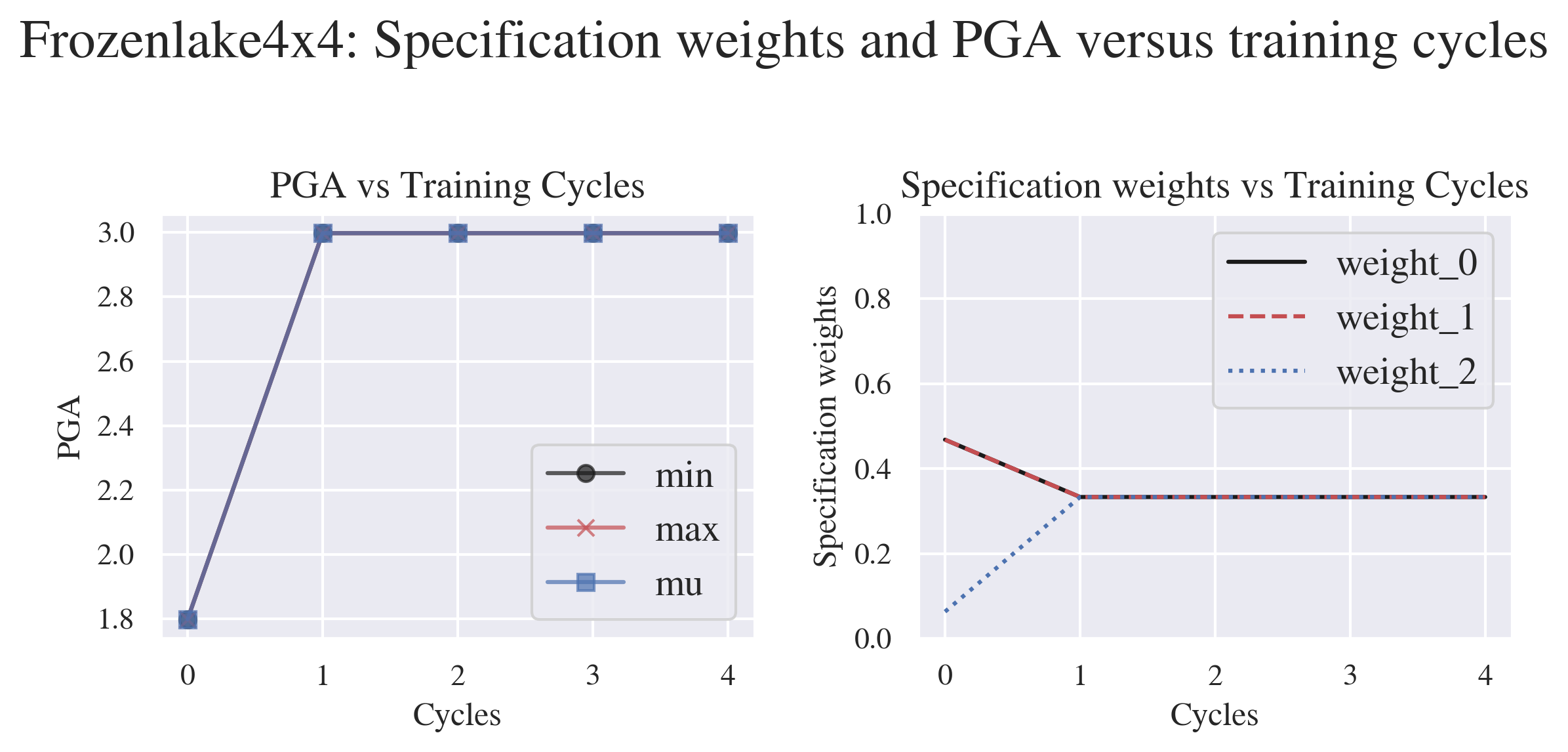}}
    \subfloat[FL4x4 Training Summary]{\includegraphics[width=0.5\linewidth]{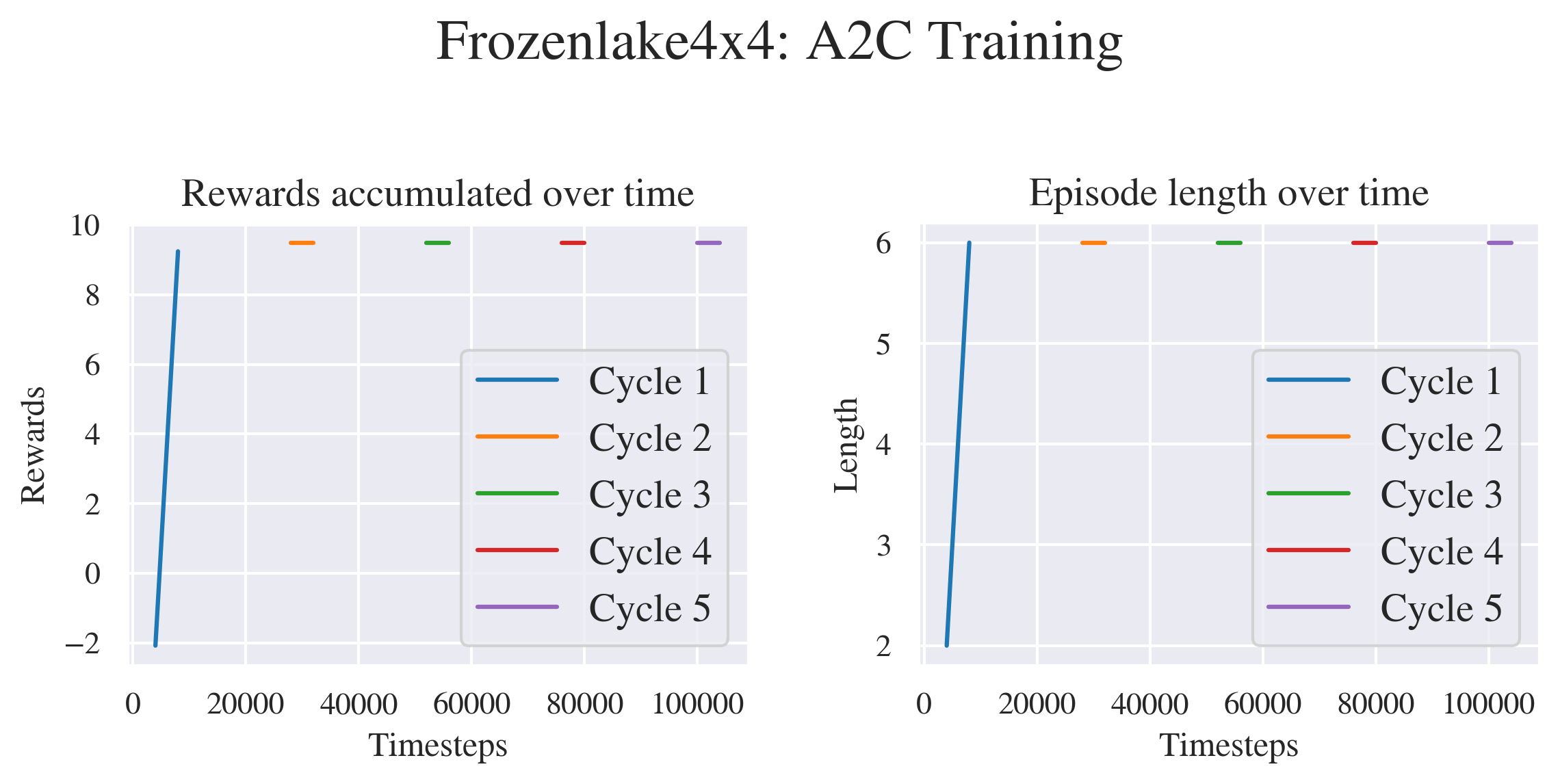}} 

    \subfloat[FL8x8 Weights]{\includegraphics[width=0.5\linewidth]{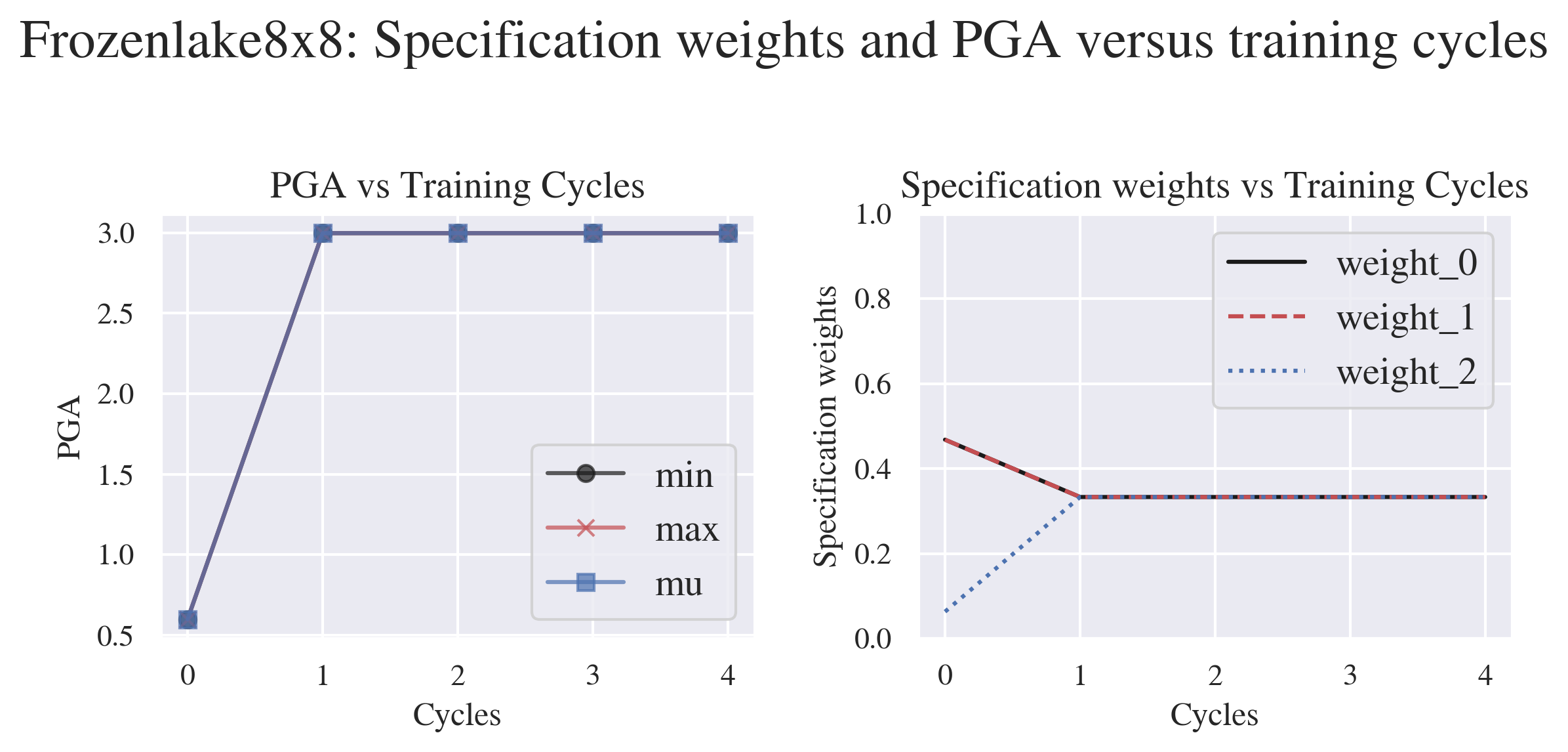}}
    \subfloat[FL8x8 Training Summary]{\includegraphics[width=0.5\linewidth]{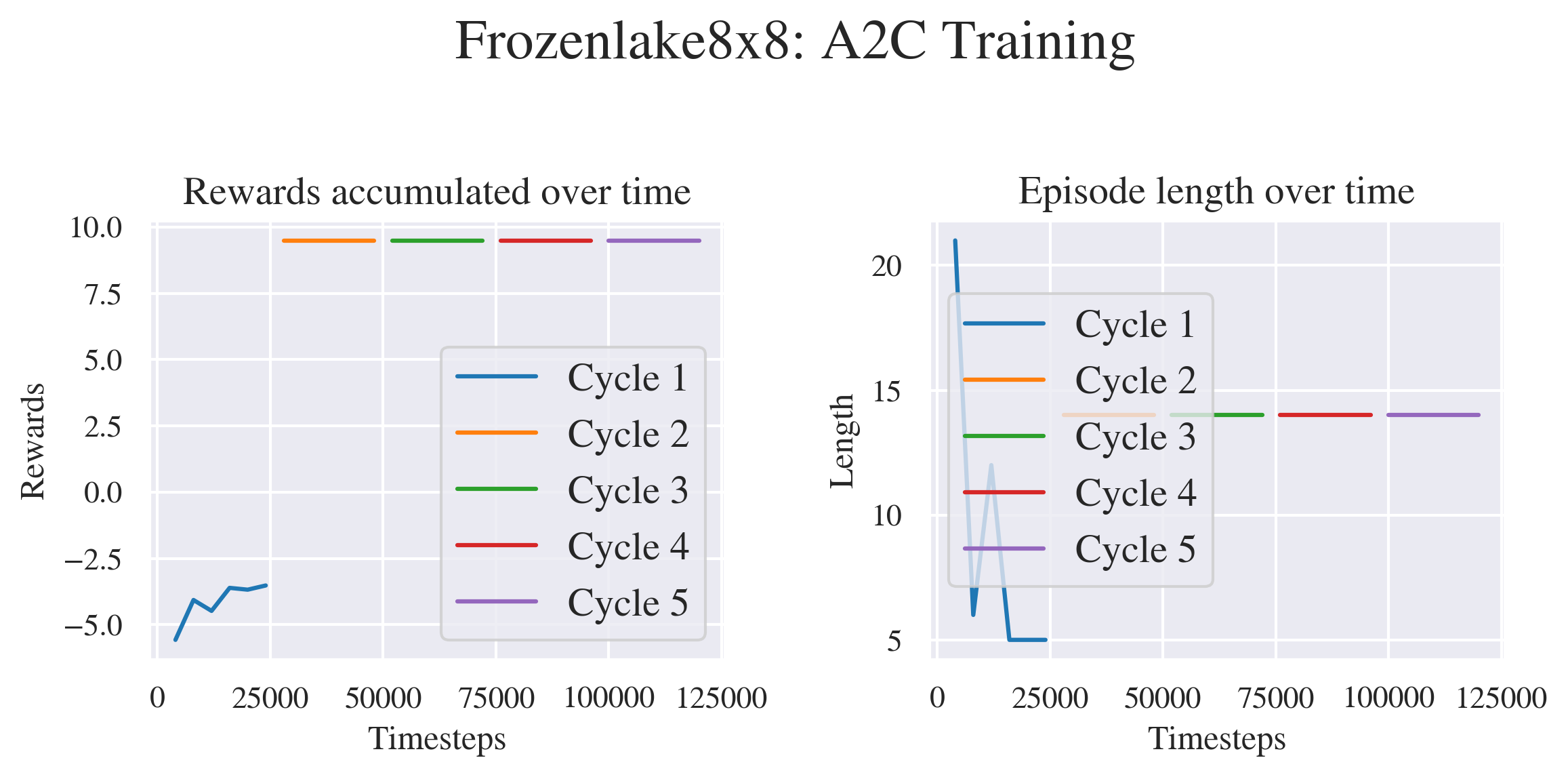}}
    \caption{Results for the 4x4 and 8x8 Frozenlake environments.}
    \label{fig:frozenlake}
\end{figure}

\subsection{Task - Reaching Pose}
The hyperparameters for both tasks: Panda-Reach and Needle-Reach, were nearly identical(\autoref{tab:reach_params}). The specifications for both these tasks are:
\begin{enumerate}
    \item Reaching the target pose: $\phix{1} \defeq \ev(\norm{ee_{pose}-target_{pose}} \leq \delta)$, where $ee$ indicates the end-effector and $\delta$ is the threshold used to determine success. For Panda-Reach, $\delta = 0.2$ and for Needle-Reach, $\delta=0.025$. 
    \item Reaching the target as quickly as possible: $\phix{2} \defeq \alw(t <= 50)$, where $t$ is the time when the end-effector reaches the target.
\end{enumerate}

\begin{table}[htbp]
	\caption{Reach task hyperparameters.\label{tab:reach_params}}
	\centering
    \begin{tabular}{c|cc}
      \toprule
      Parameters & \multicolumn{2}{c}{Values} \\
      & Panda-Reach & Needle-Reach \\
      \midrule
      \# Demos & \multicolumn{2}{c}{5} \\
      Reward Model & \multicolumn{2}{c}{Neural Network $[200 \rightarrow 200]$} \\
      \textbf{RL} & & \\
      Model & \multicolumn{2}{c}{SAC+HER} \\
      Training Timesteps & $2\cdot10^5$ & $2.5\cdot10^5$ \\
      \# AL-STL Cycles & 5 & 5 \\
      Policy Network & \multicolumn{2}{c}{Shared $[64 \rightarrow 64]$} \\
      Learning Rate & \multicolumn{2}{c}{$3\cdot10^-4$} \\
      Discount Factor $\gamma$ & \multicolumn{2}{c}{0.95} \\
      Learning Starts & \multicolumn{2}{c}{100} \\
      Batch Size & \multicolumn{2}{c}{256} \\
      Polyak Update $\tau$ & \multicolumn{2}{c}{0.005} \\
      PGA $\lambda$ &  \multicolumn{2}{c}{0.9} \\
      Training Success Rate & \multicolumn{2}{c}{100\%} \\
      Test Success Rate & \multicolumn{2}{c}{100\%} \\
      \bottomrule
	\end{tabular}
\end{table}

\subsection{Task - Placing Cube}
The hyperparameters are given in \autoref{tab:pandp_params}. The specifications for both these tasks are:
\begin{enumerate}
    \item Placing the cube at the target pose: $\phix{1} \defeq \ev(\norm{cube_{pose}-target_{pose}} \leq 0.05)$.
    \item Reaching the target as quickly as possible: $\phix{2} \defeq \alw(t <= 50)$, where $t$ is the time when the end-effector reaches the target.
\end{enumerate}

The statistics of the PGA shows that is maximum value is $\approx 6$ since there are 2 specifications, each scaled by a factor of 3. 
\begin{figure}[htbp]
    \centering
    \subfloat[Success rates on test trials]{\includegraphics[scale=0.5]{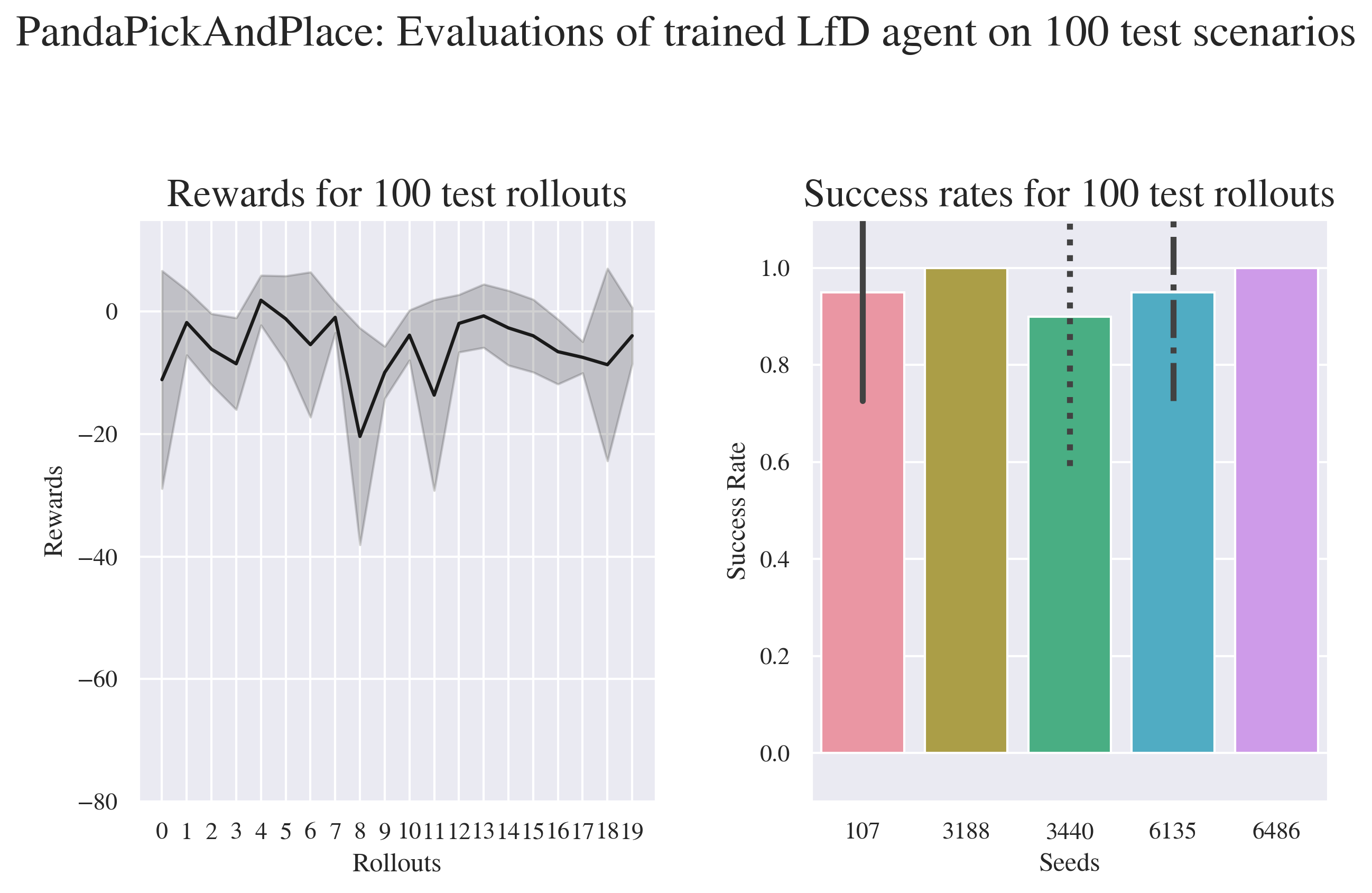}}

    \subfloat[PGA and Weights]{\includegraphics[scale=0.5]{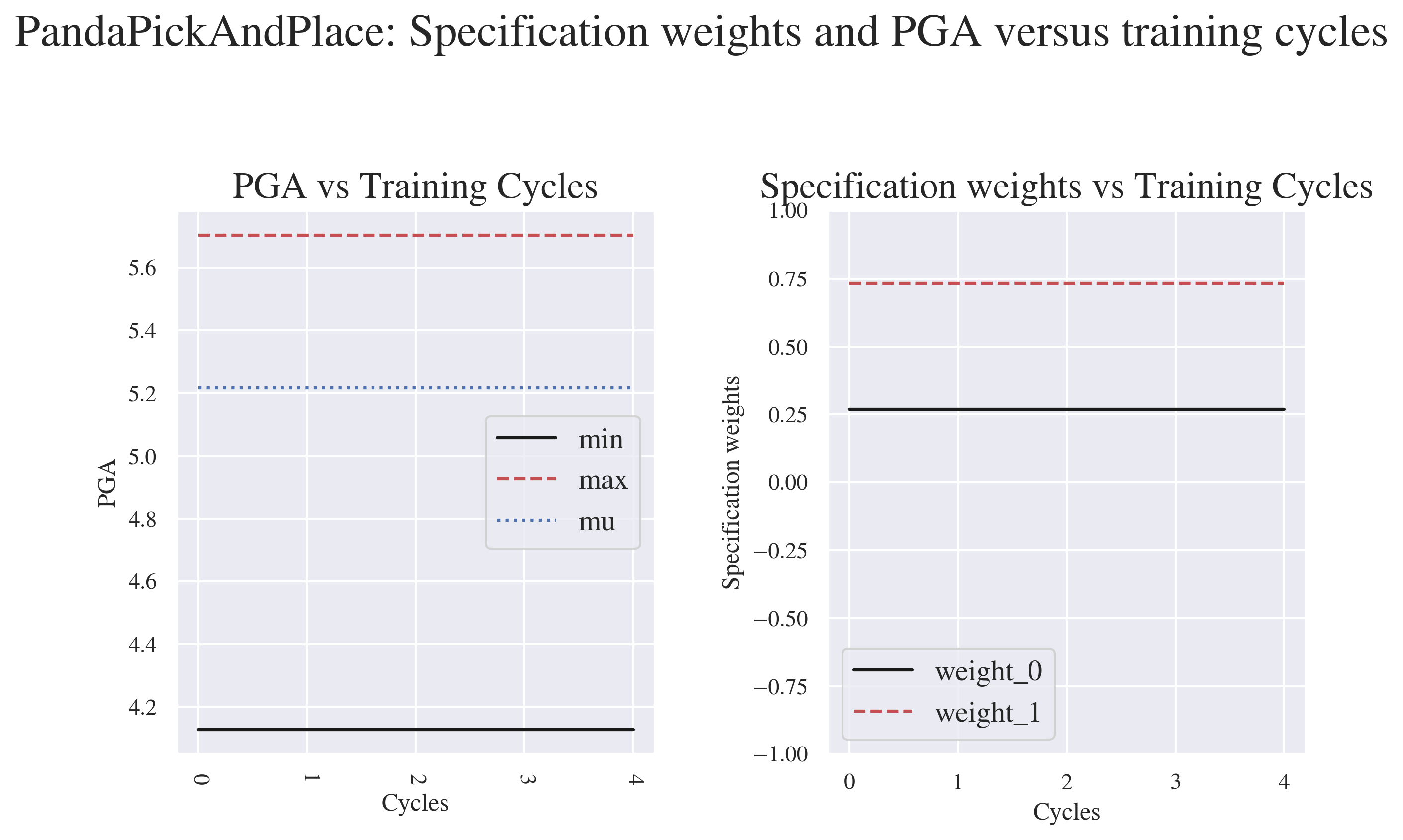}} 

    \subfloat[RL training summary]{\includegraphics[scale=0.5]{object_manipulations/PandaPickAndPlace_SuccessRates}} 
    \caption{Summary of training and evaluations for the Cube-Placing task.}
    \label{fig:app:panda_pandp}
\end{figure}

\begin{table}[htbp]
	\caption{Hyperparameters for cube-placing task.\label{tab:pandp_params}}
	\centering
    \begin{tabular}{c|c}
      \toprule
      Parameters & Value \\
      \midrule
      \# Demos & 5 \\
      Reward Model & Gaussian Process (Scale+RBF kernels) \\
      \textbf{RL} & \\
      Model & TQC+HER \\
      Training Timesteps & $10^7$ \\
      \# AL-STL Cycles & 5 \\
      Policy Network & Shared $[512 \rightarrow 512 \rightarrow 512]$\\
      Learning Rate & $1 \cdot 10^{-3}$ \\
      Discount Factor $\gamma$ & 0.95 \\
      Learning Starts & 1000 \\
      Batch Size & 2048 \\
      Polyak Update $\tau$ & 0.05 \\
      PGA $\lambda$ & 0.9 \\
      Training Success Rate & 98\% \\
      Test Success Rate & 96\% \\
      Training Time & 10.75 hours (2.15 hours/cycle) \\
      \bottomrule
	\end{tabular}
\end{table}

\subsection{Task - Opening Door}
The Panda robot uses operational space control to control the pose of the end-effector. The horizon for this task is 500 and the control frequency is 20 Hz. The hyperparameters are given in \autoref{tab:door_params}. The specifications for both these tasks are:
\begin{enumerate}
    \item Opening the door: $\phix{1} \defeq \ev(\angle door\_hinge \geq 0.3)$. Angle is measured in radians.
    \item Reaching the door handle: $\phix{2} \defeq \ev(\norm{ee - door\_handle} < 0.2)$; end-effector should be within $2cm$ of the door handle.
\end{enumerate}

\begin{figure}[htbp]
    \centering
    \subfloat[Success rates on test trials]{\includegraphics[scale=0.5]{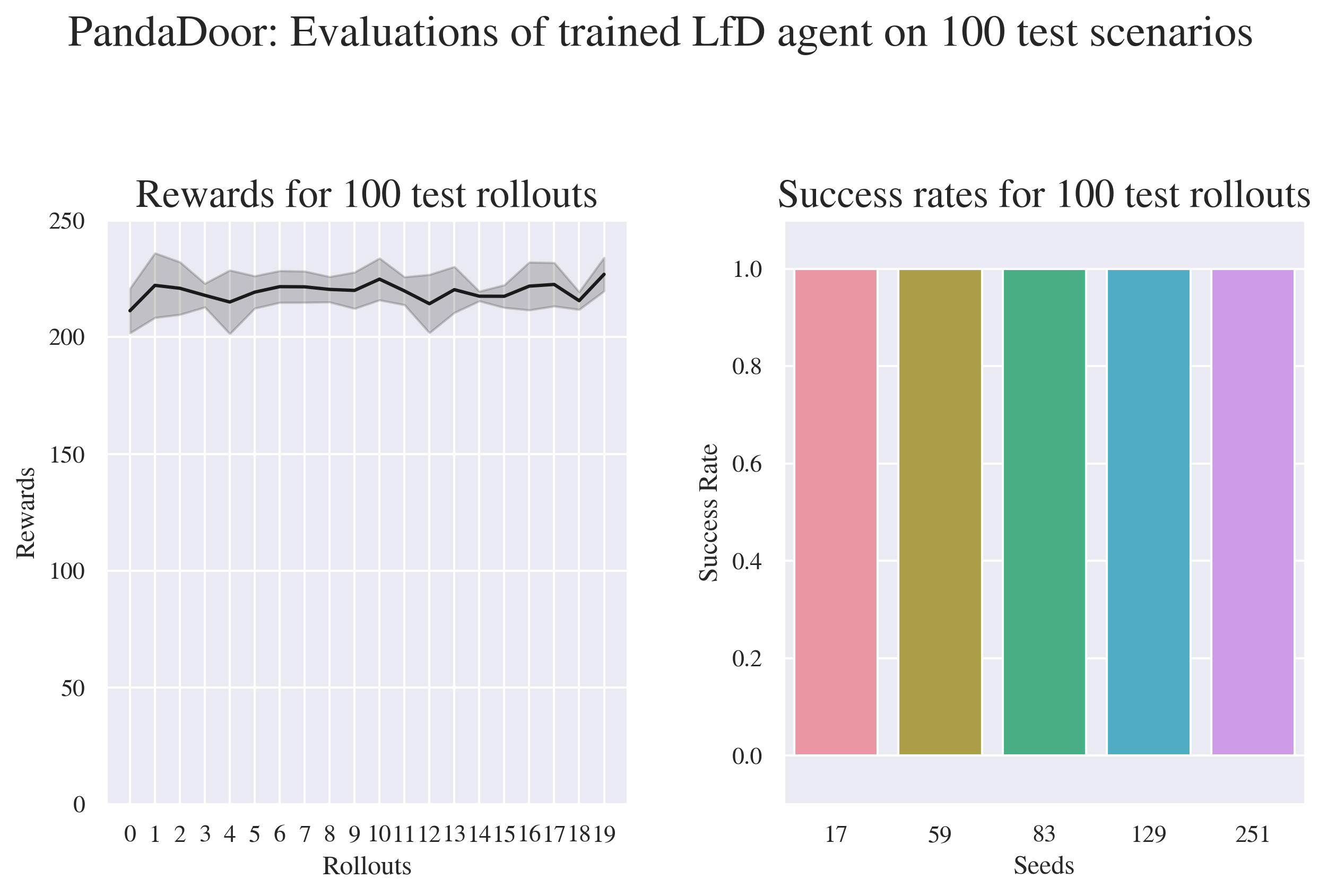}}

    \subfloat[PGA and Weights]{\includegraphics[scale=0.5]{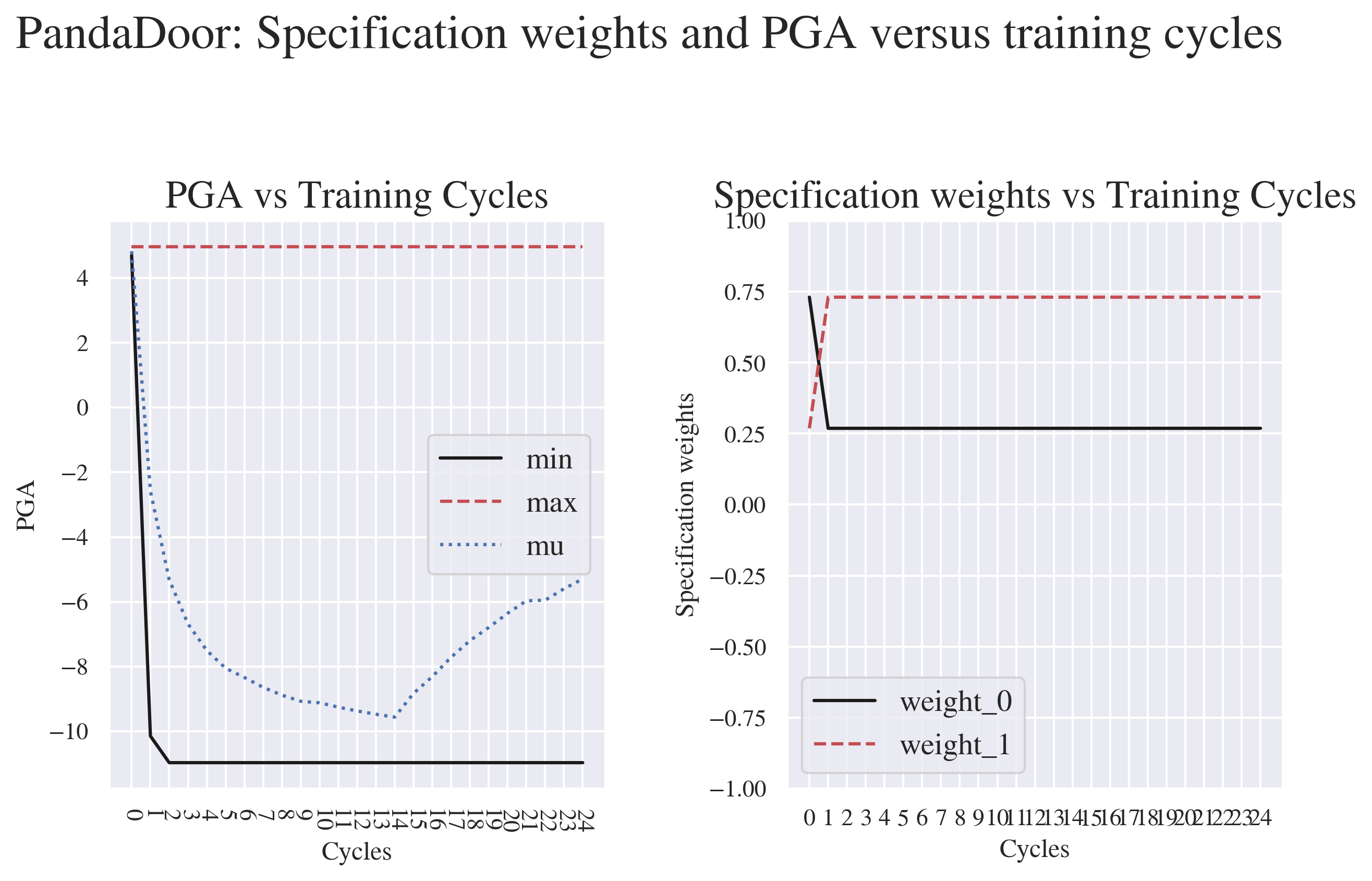}} 

    \subfloat[RL training summary]{\includegraphics[scale=0.5]{object_manipulations/PandaDoor_SuccessRates}} 
    \caption{Summary of training and evaluations for the Door-Opening task.}
    \label{fig:app:panda_door}
\end{figure}

\begin{table}[htbp]
	\caption{Hyperparameters for door-opening task. \label{tab:door_params}}
	\centering
    \begin{tabular}{c|c}
        \toprule
        Parameters & Value \\
        \midrule
        \# Demos & 5 \\
        Reward Model & Neural Network $[16 \rightarrow 16 \rightarrow 16]$ \\
        \textbf{RL} & \\
        Model & TQC \\
        Training Timesteps & $5 \cdot 10^6$ \\
        \# AL-STL Cycles & 25 \\
        Policy Network & Shared $[256 \rightarrow 256]$\\
        Learning Rate & $1 \cdot 10^{-3}$ \\
        Discount Factor $\gamma$ & 0.97 \\
        Learning Starts & 100 \\
        Batch Size & 256 \\
        Polyak Update $\tau$ & 0.5 \\
        PGA $\lambda$ & 0.3 \\
        Training Success Rate & 98\% \\
        Test Success Rate & 100\% \\
        Training Time & 6.5 hours (0.26 hours/cycle) \\
        \bottomrule
      \end{tabular}
\end{table}

\end{document}